\documentclass[12pt]{article}
\title{A Framework for Fluid Motion Estimation using a Constraint-Based Refinement Approach}
\author{Hirak Doshi\footnote{Corresponding Author}\: and N. Uday Kiran\\Department of Mathematics and Computer Science\\Sri Sathya Sai Institute of Higher Learning, Andhra Pradesh, India\\Email: \{hirakdoshi,\:nudaykiran\}@sssihl.edu.in}
\date{}
\usepackage{mathtools}
\usepackage{amsmath,amsfonts,amssymb,amsthm}
\usepackage{relsize}
\usepackage{xcolor}
\usepackage{subfig}
\usepackage{float}
\usepackage{multirow}
\usepackage{adjustbox}
\usepackage{tikz}
\usepackage{algorithm}
\usepackage[noend]{algpseudocode}

\usepackage[a4paper,left=2cm,top=3cm,right=2cm,bottom=3cm]{geometry}
\usepackage[utf8]{inputenc}

\usepackage{array}
\newcolumntype{L}{>{\centering\arraybackslash}m{11cm}}

\usepackage[T1]{fontenc}
\usepackage{tikz}
\usepackage{setspace}
\usetikzlibrary{shapes.arrows}
\usetikzlibrary{arrows,decorations.pathreplacing,positioning}
\theoremstyle{definition} 

\newtheorem{lemma}{Lemma}

\newtheorem{theorem}{Theorem}

\newtheorem*{keywords}{\small Keywords}

\newtheorem*{class}{Mathematics Subject Classification 2020}

\usepackage{hyperref}
\hypersetup{
	colorlinks=true,
	linkcolor=blue,
	citecolor=red,      
	urlcolor=cyan,
}
\allowdisplaybreaks
\begin{document}
	\onehalfspacing
	\fontsize{13pt}{13pt}\selectfont
	\maketitle
	\begin{abstract}
		
		Physics-based optical flow models have been successful in capturing the deformities in fluid motion arising from digital imagery. However, a common theoretical framework analyzing several physics-based models is missing. In this regard, we formulate a general framework for fluid motion estimation using a constraint-based refinement approach. We demonstrate that for a particular choice of constraint, our results closely approximate the classical continuity equation-based method for fluid flow. This closeness is theoretically justified by augmented Lagrangian method in a novel way. The convergence of Uzawa iterates is shown using a modified bounded constraint algorithm. The mathematical well-posedness is studied in a Hilbert space setting. Further, we observe a surprising connection to the Cauchy-Riemann operator that diagonalizes the system leading to a diffusive phenomenon involving the divergence and the curl of the flow. Several numerical experiments are performed and the results are shown on different datasets. Additionally, we demonstrate that a flow-driven refinement process involving the curl of the flow outperforms the classical physics-based optical flow method without any additional assumptions on the image data.

	\end{abstract}
\begin{keywords}
	\small Fluid Motion Estimation, Evolutionary PDE, Image Processing, Augmented Lagrangian, Bounded Constraint Algorithm.
\end{keywords}
	\begin{class}
	35J47, 35J50, 35Q68.
	\end{class}
	\section{Introduction}
	\label{sec:1}

Variational models for motion estimation have always been one of the central topics in mathematical image processing. Since the seminal work of Horn and Schunck \cite{Horn} on the variational approach to optical flow motion estimation, many in-depth studies on this topic have been done by developing different variational models of optical flow to obtain useful insights into motion estimation (e.g. \cite{Schnorr,Aubert,Hinterberger,Weickert}). Many of these literature works on motion estimation have been focused on the constancy assumption e.g., the brightness constancy leading to an algebraic equation, in $(u,v)$ the motion components, called the optical flow constraint (OFC):
\begin{equation}
	f_\tau+f_{x}u+f_{y}v=0,
	\label{ofc}
\end{equation}
where $f(x,y,\tau)$ is the image sequence $f:\Omega\times [0,\infty)\to \mathbb{R}$ for an open bounded set $\Omega\subset\mathbb{R}^2$, $(x,y)$ are the spatial coordinates and $\tau$ is the time variable. However, constancy assumptions can't reflect the reality of actual motion because deformation effects of fluid, illumination variations, perspective changes, poor contrast etc. would directly affect the important motion parameters. For this reason, physics-dependent motion estimation algorithms have been widely investigated. 

A relatively recent work of Corpetti et al. \cite{Corpetti,corpetti,corpetti2} used the image-integrated version of the continuity equation from fluid dynamics. Since then, there has been a lot of attention on studying physics-based motion estimation. Estimates on the conservation of mass for a fluid in a digital image sequence were discussed by Wildes et.al. \cite{wildes}. The image intensity was obtained as an average of the object density, with the incident light parallel to the $z$-axis such that the 2D projection of the image intensity can be captured as
\begin{equation*}
	f(x,y,\tau)=\int_{z_1}^{z_2} \rho(x,y,z,\tau) \:dz.
\end{equation*}
Further using fluid mechanics models, Liu et al. \cite{Liu} provided a rigorous framework for fluid flow by deriving the projected motion equations. Here the optical flow is proportional to the path-averaged velocity of fluid or particles weighted with a relevant field quantity. 
Luttman et al. \cite{Luttman} computed the potential (resp. stream) function directly by assuming that the flow estimate is the gradient of a potential function (resp. symplectic gradient of a stream function). To overcome the limitations of the global smoothness regularization for fluid motion estimation, Corpetti et.al \cite{corpetti} proposed a second-order div-curl regularization for a better understanding of intrinsic flow structures. A detailed account of various works in fluid flow based on physics is given in \cite{heitz}. Here the authors observed a physical meaning associated with the terms of the continuity equation (CEC):
\begin{equation}
	\underbrace{f_\tau+f_{x}u+f_{y}v}_{\textbf{(a)}}+\underbrace{f(u_{x}+v_{y})}_{\textbf{(b)}}=0;
	\label{coneq}
\end{equation}
$\textbf{(a)}$ corresponds to the brightness constancy given in (\ref{ofc}) and $\textbf{(b)}$ is the non-conservation term due to loss of particles. Furthermore, a divergence-free approximation of the equation (\ref{coneq}) can be obtained by setting the term $\textbf{(b)}$ to zero. It is thus natural to study the effect of the term (\textbf{b}) in extracting the inherent fluid properties of the flow.

\subsection{Our Contribution}
In the current work, we develop a generic physics-based framework for fluid motion estimation for capturing intrinsic spatial characteristics and vorticities. It consists of a two-step technique to compute fluid motion from a sequence of scalar fields or illuminated particles transported by the flow. The technique proposed consists in filtering with an appropriate semi-group the divergence of an initial velocity field estimated through a classical Horn and Schunck (HS) estimator. The vorticity of the initial solution is kept constant while the flow is transformed through its divergence in order to obey a continuity equation for the brightness data which has been used in several optical flow estimators dedicated to fluid flow.

In particular, our method uses a constraint-based refinement approach. As mentioned above, in the two-step technique, the first requirement is an initial flow estimate $(u_0,v_0)$ that obeys the classical optical flow principles like brightness constancy and pixel correspondence. This estimate may not be able to capture the underlying geometric features of the fluid flow. The main idea is to perform a refinement over this crude estimate to capture precise flow structures driven by additional constraints specific to applications. As a concrete example, we choose the initial estimate coming from the Horn and Schunck model \cite{Horn}. It is well-known that this model is not well-suited for fluid motion estimation. In fact the global smoothness regularization damps both the divergence and vorticity of the motion field. We show in particular how this model can be adapted and refined through our approach. A special feature of our model is the diagonalization by the Cauchy-Riemann operator leading to a diffusion on the curl and a multiplicative perturbation of the laplacian on the divergence of the flow.

There are two main advantages of our method. Firstly, from a theoretical perspective it provides us with an evolutionary PDE setup which allows a rigorous mathematical framework for the well-posedness discussion. Secondly, the contractive semigroup structure on the divergence leads to a simpler numerical analysis. A modified bounded constraint algorithm \cite{nocedal} is employed to theoretically show the convergence of the dual variable introduced by the augmented Lagrangian formulation. The inner iterations of the algorithm use the contractive semigroup of the elliptic term. This approach thus allows us to build a quantitative connection between the optical flow and the fluid flow for various flow visualizations which is often a key problem.

The paper is organized as follows. In Section \ref{sec:2}, we give a detailed description of our model. Section \ref{sec:3} is devoted to the mathematical framework. Here we discuss the mathematical well-posedness and the regularity of the solutions. We also show the diagonalization process under the application of the Cauchy-Riemann operator. In Section \ref{sec:7}, we show how for a particular choice of additional constraints, our model closely approximates the continuity equation model using a modified augmented Lagrangian formulation. We also employ the bounded constraint algorithm to show the convergence of the Uzawa iterates. Finally, in Section \ref{sec:9}, we show our results on different datasets.

\section{Description of the Model}
\label{sec:2}

Our general formulation is given as
\begin{equation}
	J_{\text{R}}(\textbf{u})=\beta\int_\Omega \phi(f)\psi(\nabla\textbf{u})+\alpha\int_\Omega\{|\nabla u|^2+|\nabla v|^2\},
	\label{fun}
\end{equation}
where the constants $\alpha, \beta$ are weight parameters, the function $\psi$ depends on the components of the flow and its derivatives which essentially captures the underlying geometric structures and the function $\phi$ corresponds to an image-dependent weight term. A few possible combinations are summarized in the following table:

\begin{table}[H]
	\centering
	\begin{tabular}{|c|c|c|}
		\hline
		$\phi(f)$ & $\psi(\nabla\textbf{u})$ & Nature of the model\\ 
		\hline & & \\[-1.5ex]
		$f^2$ & $(\nabla\cdot\textbf{u})^2$ & Anisotropic, image-driven, penalizing divergence of the flow\\[0.3cm]
		1 & $(\nabla\cdot\textbf{u})^2$ & Isotropic, flow-driven, penalizing divergence of the flow\\[0.3cm]
		$f^2$ & $(\nabla_H\cdot\textbf{u})^2$ & Anisotropic, image-driven, penalizing curl of the flow\\[0.3cm]
		1 & $(\nabla_H\cdot\textbf{u})^2$ & Isotropic, flow-driven, penalizing curl of the flow\\
		\hline
	\end{tabular}
	\caption{Some choices for the functions $\phi$ and $\psi$}
	\label{tab}
\end{table}
As seen from the table, the function $\phi$ dictates whether the refinement process is image-driven or flow-driven. When $\phi(f)=1$, there is no influence of the image data on the additional constraint. As a result, the refinement process is completely flow-driven. We assume that both the functions $\phi$ and $\psi$ are real-valued smooth functions. Moreover, we assume $\phi(f)$ to be a monotone-increasing function. The first term of $J_{\text{R}}(\textbf{u})$ captures the non-conservation term that violates the constancy assumptions and the second term is the $L^{2}$ regularization which governs the diffusion phenomena. In the current work, we particularly focus on the case where $\psi=(\nabla\cdot\textbf{u})^2$, i.e. where the function $\psi$ penalizes the divergence of the flow. In this case, the refinement functional becomes
\begin{equation}
	J_{\text{R}}(\textbf{u})=\beta\int_\Omega \phi(f)(\nabla\cdot\textbf{u})^2+\alpha\int_\Omega\{|\nabla u|^2+|\nabla v|^2\}.
	\label{fun1}
\end{equation}
\subsection{Additional Constraint penalizing the Curl of the Flow}
In Table (\ref{tab}) we have suggested two such choices for $\psi$, one penalizing the divergence of the flow and the other penalizing the curl. The operator $\nabla_H:=(\partial_y,-\partial_x)$ is called the orthogonal gradient, also referred to as the symplectic gradient in the literature \cite{Luttman}. This geometric constraint captures the rotational aspects of the flow better. In this work, we will particularly demonstrate that a flow-driven refinement process involving the curl of the flow outperforms the classical physics-based optical flow method without any additional assumptions on the image data.

\section{Mathematical Framework}
\label{sec:3}
\subsection{Diagonalization of the System}
\label{sec:4}

The associated system of PDEs for the variational formulation is given as:

\begin{equation}
	\begin{cases}
		\dfrac{\partial u}{\partial t}=\Delta u + a_0 \dfrac{\partial}{\partial x}[\phi (f)(u_x+v_y)] \text{ in }\Omega\times (0,\infty),\\[0.5cm]
		\dfrac{\partial v}{\partial t}=\Delta v + a_0 \dfrac{\partial}{\partial y}[\phi (f)(u_x+v_y)] \text{ in }\Omega\times (0,\infty),\\[0.5cm]
		u(x,y,0) = u_0 \text{ in } \Omega,\\[0.3cm]
		v(x,y,0) = v_0 \text{ in } \Omega,\\[0.3cm]
		u = 0 \text{ on }\partial\Omega\times (0,\infty),\\[0.3cm]
		v = 0 \text{ on }\partial\Omega\times (0,\infty).
	\end{cases}
	\label{sys}
\end{equation}
Here $(u_0,v_0)$ is the initial flow estimate obtained from the pixel correspondence, $a_0=\beta/\alpha$ is a positive constant. Since there is no pixel motion at the boundary, it is natural to work with Dirichlet boundary conditions. We will show that the system (\ref{sys}) can be diagonalized by an application of Cauchy-Riemann operator. This special feature is intriguing as well as of great advantage for later analysis. Let us first rewrite the system (\ref{sys}) as
\begin{equation}
	\frac{\partial \textbf{u}}{\partial t}=A\textbf{u},
	\label{pl3} 
\end{equation}
where
\begin{equation*}
	\frac{\partial \textbf{u}}{\partial t}=
	\begin{bmatrix}
		\frac{\partial u}{\partial t}\\[0.3cm]
		\frac{\partial v}{\partial t}
	\end{bmatrix},
	\qquad
	A\textbf{u}=
	\begin{bmatrix}
		\Delta u + a_0 \frac{\partial}{\partial x}[\phi (f)(u_x+v_y)]\\[0.3cm]
		\Delta v + a_0 \frac{\partial}{\partial y}[\phi (f)(u_x+v_y)]
	\end{bmatrix}.
\end{equation*}
As we are in the Sobolev setting, the derivatives are taken in a distributional sense. Thus, a key observation is that the order of derivatives can be interchanged. Let us denote by $R$ the Cauchy-Riemann operator matrix
\begin{equation*}
	R=\begin{bmatrix}
		\partial_y & -\partial_x \\[0.3cm]
		\partial_x & \partial_y	
	\end{bmatrix}.
\end{equation*}
Acting $R$ on both sides of (\ref{pl3}) leads to
\begin{equation*}
	R\Big(\frac{\partial \textbf{u}}{\partial t}\Big)=RA\textbf{u}.
\end{equation*}
This leads to the following transformation of the original coupled system
\begin{align*}
	\begin{bmatrix}
		\partial_y & -\partial_x \\[0.3cm]
		\partial_x & \partial_y	
	\end{bmatrix}
	\begin{bmatrix}
		\frac{\partial u}{\partial t}\\[0.3cm]
		\frac{\partial v}{\partial t}
	\end{bmatrix}
	&=
	\begin{bmatrix}
		\partial_y & -\partial_x \\[0.3cm]
		\partial_x & \partial_y	
	\end{bmatrix}
	\begin{bmatrix}
		\Delta u + a_0 \frac{\partial}{\partial x}[\phi (f)(u_x+v_y)]\\[0.3cm]
		\Delta v + a_0 \frac{\partial}{\partial y}[\phi (f)(u_x+v_y)]
	\end{bmatrix}\\[0.3cm]
	&=
	\begin{bmatrix}
		\Delta & 0\\[0.3cm]
		0 & \Delta\circ k 
	\end{bmatrix}
	\begin{bmatrix}
		\partial_y & -\partial_x \\[0.3cm]
		\partial_x & \partial_y
	\end{bmatrix}
	\begin{bmatrix}
		u \\[0.3cm]
		v
	\end{bmatrix},
\end{align*}
where, with a slight abuse of notation we denote $k$ for the function $1+a_0\phi(f)$ and for the multiplicative operator $x\mapsto kx$. Since $\phi$ is a  bounded function and $a_0>0$, the multiplicative term $k$ is bounded and strictly positive. We have thus obtained the following decoupling:
\begin{equation}
	\frac{\partial }{\partial t}(R\textbf{u})=DR\textbf{u},
	\label{pl4} 
\end{equation}
where 
\begin{equation*}
	D=
	\begin{bmatrix}
		\Delta & 0\\[0.3cm]
		0 & \Delta\circ k 
	\end{bmatrix},
	\qquad
	R=
	\begin{bmatrix}
		\partial_y & -\partial_x \\[0.3cm]
		\partial_x & \partial_y
	\end{bmatrix}.
\end{equation*}
The application of the Cauchy-Riemann operator $R$ on the system has resulted in the following relation
\begin{equation*}
	D= RAR^{-1}.
\end{equation*}
The operator $A$ has been diagonalized by the matrix $R$. The decoupled system (\ref{pl4}) thus takes the following form:
\begin{align}
	\begin{cases}
		\dfrac{\partial \xi}{\partial t}&=\Delta \xi,\\[0.5cm]
		\dfrac{\partial \zeta}{\partial t}&=\Delta (k\zeta),
	\end{cases}
	\label{decsys}
\end{align}
where $\xi:=u_{y}-v_{x}$ is the curl and $\zeta:=u_{x}+v_{y}$ is the divergence of the flow and $k=1+a_{0}\phi(f)$ is a multiplicative factor.

\subsection{Multiplicative Perturbation of the Laplacian}
\label{sec:5}
Let us consider the divergence equation from (\ref{decsys})
\begin{equation}
	\frac{\partial \zeta}{\partial t}=\Delta (k\zeta),
	\label{per}
\end{equation}
where $k=1+a_0\phi(f)$. We make a change of variable $\eta=k\zeta$. This transformation leads to the equation:

\begin{equation}
	\frac{\partial \eta}{\partial t}=k\Delta \eta.
	\label{pert}
\end{equation}
The operator $k\Delta$ is the multiplicative perturbation of the laplacian. It arises in many physical phenomena e.g. in the theory of wave propagation in non-homogeneous media. The operator has been studied in appropriate weighted function spaces see \cite{eidus,altomare}. In our case the multiplicative perturbation $k\Delta$ leads to an image driven perturbation because of the $k$ factor which depends on the image $f$. The authors in \cite{altomare} derive an approprixation for the kernel of the associated semigroups by positive linear integral operators
\begin{equation}\label{anisotropic_gaussian}
	G_m(f)(x)=\Big(\frac{m}{4\pi k(x)}\Big)^{n/2}\int_{\mathbb{R}^n}f(y)\exp\Big(-\frac{m}{4k (x)}|x-y|^2\Big)\:dy.
\end{equation}
Using the above kernel one can design an appropriate stencil for convolution. It is also interesting to note that the stencil size varies with respect to the image intensity. This aspect we will discuss in a forthcoming paper. Let us now consider the Gaussian kernel associated with the operator $k\Delta$
\begin{equation*}
	G_k(x,t):=\frac{1}{4\pi k(x)t}\exp\Big(-\frac{|x|^2}{4k(x)t}\Big).
\end{equation*} 
The perturbation $k$ plays an important role in controlling the rate of diffusion. If $k$ is large then the Gaussian becomes broader and shorter and if it is small then the Gaussian is thinner and taller. Since $\Omega$ is bounded, the perturbation $k$ is bounded. Hence there exists $a_1,a_2>0$ such that $a_1\le k(x)\le a_2$. Using this fact, we can obtain the following bound on $G_k(x,t)$.
\begin{lemma}
	Let $n=2$. Then
	\begin{equation*}
		\|G_k(x,t)\|_p\le C_k t^{(1-p)/p},
	\end{equation*}
	where the constant
	\begin{equation*}
		C_k=\frac{1}{4\pi a_1}\Big(\frac{4\pi a_2}{p}\Big)^{1/p}.
	\end{equation*}
\end{lemma}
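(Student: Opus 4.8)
The plan is to bound the $L^p(\mathbb{R}^2)$ norm directly, exploiting the separable structure of the kernel into an algebraic prefactor times a Gaussian factor, and then reducing the computation to an elementary two-dimensional Gaussian integral. Since $G_k(\cdot,t)$ is, for fixed $t$, an explicit nonnegative function of $x$, no abstract machinery is needed; the whole statement is an exercise in carefully majorizing the integrand using the two-sided bound $a_1\le k(x)\le a_2$.

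First I would raise the kernel to the $p$-th power, writing
\[
|G_k(x,t)|^p=\left(\frac{1}{4\pi k(x) t}\right)^p\exp\left(-\frac{p|x|^2}{4k(x)t}\right).
\]
The crucial observation is that the two occurrences of $k(x)$ must be estimated in \emph{opposite} directions. For the prefactor I would use the lower bound $k(x)\ge a_1$, which gives $\frac{1}{4\pi k(x) t}\le\frac{1}{4\pi a_1 t}$. For the exponential, note that the exponent is negative and becomes less negative as $k$ grows, so the factor is largest when $k$ is largest; I would therefore use the upper bound $k(x)\le a_2$ to obtain $\exp\!\left(-\frac{p|x|^2}{4k(x)t}\right)\le\exp\!\left(-\frac{p|x|^2}{4a_2 t}\right)$. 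Combining these yields the pointwise majorant
\[
|G_k(x,t)|^p\le\left(\frac{1}{4\pi a_1 t}\right)^p\exp\left(-\frac{p|x|^2}{4a_2 t}\right).
\]

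Next I would integrate over $\mathbb{R}^2$. Setting $a=p/(4a_2 t)$, the remaining integral is the standard isotropic Gaussian $\int_{\mathbb{R}^2}e^{-a|x|^2}\,dx=\pi/a$, which (this is exactly where the hypothesis $n=2$ enters, since in $n$ dimensions the value is $(\pi/a)^{n/2}$) evaluates to $4\pi a_2 t/p$. This produces
\[
\int_{\mathbb{R}^2}|G_k(x,t)|^p\,dx\le\left(\frac{1}{4\pi a_1 t}\right)^p\cdot\frac{4\pi a_2 t}{p}.
\]
Taking the $p$-th root and collecting the powers of $t$, the factor $t^{-1}\cdot t^{1/p}=t^{(1-p)/p}$ emerges, while the remaining constant assembles into $C_k=\frac{1}{4\pi a_1}\bigl(\frac{4\pi a_2}{p}\bigr)^{1/p}$, which is precisely the claimed bound.

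There is no genuine analytic obstacle here; the estimate is elementary once the kernel is written in product form. The one step I would flag as requiring care is the direction of the two bounds on $k(x)$: using the lower bound $a_1$ for the algebraic prefactor and the upper bound $a_2$ inside the exponential is exactly what makes the majorant integrable and delivers the stated constant. As a minor bookkeeping remark, the exponent $t^{(1-p)/p}$ is nonpositive for $p\ge 1$, recording the expected $L^p$ scaling of a heat-type kernel and confirming the qualitative behaviour described before the lemma (broader, shorter Gaussian for large $k$ and thinner, taller for small $k$).
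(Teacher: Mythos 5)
Your proof is correct, and in fact the paper states this lemma without providing any proof at all, so your argument fills that gap rather than diverging from one. The computation is exactly the intended one: bounding the two occurrences of $k(x)$ in opposite directions (lower bound $a_1$ in the prefactor, upper bound $a_2$ in the exponential), evaluating $\int_{\mathbb{R}^2}e^{-a|x|^2}\,dx=\pi/a$ with $a=p/(4a_2t)$ (which is where $n=2$ enters), and taking the $p$-th root reproduces the stated constant $C_k$ and the exponent $t^{(1-p)/p}$ exactly. The only cosmetic remark is that you integrate over $\mathbb{R}^2$ while the paper works on a bounded domain $\Omega$; since the integrand is nonnegative the same bound holds a fortiori on $\Omega$, so nothing is lost.
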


\subsection{Wellposedness and Regularity}
\label{sec:6}
	Let us now consider the abstract IVP associated with the first equation in (\ref{decsys}):
\begin{equation}
	\begin{cases}
		\dfrac{d\xi}{dt} + A_1\xi = 0 \text{ on } [0,\infty),\\[0.5cm]
		\xi(0)=\xi_0 \in L^2(\Omega).
		\label{eivp}
	\end{cases}
\end{equation}
where the initial data
\[
\xi_0 = \partial_yu_0-\partial_xv_0,
\]
and $(u_0,v_0)$ is the Horn and Schunck optical flow. Here $A_1:D(A_1)\to \mathcal{H}_1$ is an (unbounded) operator
\begin{equation*}
	\begin{cases}
		D(A_1)=\{\xi\in H^2(\Omega)\cap H^1_0(\Omega):A_1\xi\in L^2(\Omega)\},\\[0.3cm]
		\mathcal{H}_1=L^2(\Omega),\\[0.3cm]
		A_1\xi:=-\Delta \xi.
	\end{cases}
\end{equation*}
The operator $A_1$ is maximal monotone and symmetric. Hence it is self adjoint. For the well-posedness of the problem (\ref{eivp}) we refer to (Theorem $7.7$ and $10.1$ in \cite{brezis}). Similarly, the abstract IVP for the second equation becomes 
\begin{equation}
	\begin{cases}
		\dfrac{d\eta}{dt} + A_2\eta = 0 \text{ on } [0,\infty),\\[0.5cm]
		\eta(0)=\eta_0 \in L^2(\Omega).
		\label{eivp2}
	\end{cases}
\end{equation}
where the initial data
\begin{equation*}
	\eta_0=k(\partial_x u_0+\partial_y v_0),
\end{equation*}
is the weighted divergence of the Horn and Schunck optical flow. Here  $A_2:D(A_2)\to \mathcal{H}_2$ is the (unbounded) operator
\begin{equation*}
	\begin{cases}
		D(A_2)=\{\eta\in H^2(\Omega)\cap (H^1_0)_k(\Omega): A_2\eta\in L^2_k(\Omega)\},\\[0.3cm]
		\mathcal{H}_2=L^2_k(\Omega),\\[0.3cm]
		A_2 \eta=-k\Delta \eta.
	\end{cases}
\end{equation*}
Also $\eta_0\in L^2(\Omega)$. Here the operator $k\Delta$ is a multiplicative perturbation of the laplacian where $k$ is bounded and strictly positive since $\phi$ is a monotone increasing function. The problem will be studied in weighted Sobolev space $\mathcal{H}_2$ following a similar approach as in \cite{eidus}. The space $\mathcal{H}_2$ is a Hilbert space equipped with the inner product
\begin{equation*}
	\langle w_1,w_2\rangle_{\mathcal{H}_2}:=\int_\Omega \frac{1}{k} \:w_1w_2\:dx
\end{equation*}
and the norm
\begin{equation*}
	\|w\|^2_{\mathcal{H}_2}=\int_\Omega \frac{1}{k} \:|w|^2\: dx.
\end{equation*}
Similarly the Hilbert space $\mathcal{H}_3:=(H^1_0)_k(\Omega)$ has the norm \cite{eidus}
\begin{equation*}
	\|w\|^2_{\mathcal{H}_3}:=\|w\|^2_{\mathcal{H}_2}+\|\nabla w\|_{\mathcal{H}_1}^2
	= \int_\Omega \Big(\frac{1}{k}|w|^2+|\nabla w|^2\Big).
\end{equation*}
The operator $A_2$ is symmetric and maximal monotone. It is also interesting to note that in our context the weight term $k$ in $\mathcal{H}_2$ is actually dependent on the image $f$ - bringing in an anisotropy into the discussion. Thus $\mathcal{H}_2$ is an image dependent Sobolev space. When $\phi(f)$ is a constant function, i.e. the case where the refinement is independent of the image, the norms $||\cdot||_{\mathcal{H}_1}$ and $||\cdot||_{\mathcal{H}_2}$ coincide upto a constant. In our context, as the values of the image are bounded, $k$ is a bounded function. Also in the pre-processing stage since the images are smoothened with a Gaussian filter we can further assume that $k$ is smooth. We will prove a result on the regularity of the solution for any non-zero time in the diffusion process.
\begin{theorem}
	Let $\eta_0\in \mathcal{H}_1$. Then the solution of the problem (\ref{eivp2}) satisfies
	\begin{align*}
		\eta\in C^1((0,\infty),\mathcal{H}_2)\cap C([0,\infty),H^2(\Omega)\cap \mathcal{H}_3).
	\end{align*}
	For any $\epsilon >0$ we also have
	\begin{equation}
		\eta\in C^\infty([\varepsilon,\infty)\times \overline{\Omega}).
		\label{rel1}
	\end{equation}
	Moreover, $\eta\in L^2((0,\infty),\mathcal{H}_3)$, and
	\begin{equation}
		\frac{1}{2}\|\eta(T)\|^2_{\mathcal{H}_1}+\int_0^T \|\nabla \eta(t)\|^2_{\mathcal{H}_1}\:dt =\frac{1}{2}\|\eta_0\|^2_{\mathcal{H}_2}
		\label{rel2}
	\end{equation}
	holds for $T>0$.
\end{theorem}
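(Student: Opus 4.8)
The plan is to read the initial value problem (\ref{eivp2}) as the abstract Cauchy problem for the self-adjoint, maximal monotone operator $A_2=-k\Delta$ on the weighted Hilbert space $\mathcal{H}_2=L^2_k(\Omega)$ and to exploit the smoothing of the analytic semigroup it generates, running the argument exactly as for the heat equation in \cite{brezis} but carried out in the weighted space. Since $\phi(f)$ is bounded and $a_0>0$, the weight satisfies $a_1\le k\le a_2$, so $\mathcal{H}_2$ and $\mathcal{H}_1=L^2(\Omega)$ agree as sets with equivalent norms; in particular the hypothesis $\eta_0\in\mathcal{H}_1$ already gives $\eta_0\in\mathcal{H}_2$, and I may pass freely between the two norms. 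First I would record that $A_2$ is self-adjoint with $\langle A_2 w,w\rangle_{\mathcal{H}_2}=\int_\Omega|\nabla w|^2\,dx\ge 0$, so $-A_2$ generates a strongly continuous contraction semigroup $S(t)=e^{-tA_2}$ which, being generated by a nonnegative self-adjoint operator, is analytic. The unique solution is then $\eta(t)=S(t)\eta_0$.

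Next I would extract the regularity for $t>0$ from the spectral representation $A_2=\int_0^\infty\lambda\,dE_\lambda$, which yields the standard bounds
\begin{equation*}
\|A_2^m S(t)\|_{\mathcal{L}(\mathcal{H}_2)}=\sup_{\lambda\ge 0}\lambda^m e^{-t\lambda}=\Big(\frac{m}{et}\Big)^m,
\end{equation*}
so that $\eta(t)\in D(A_2^m)$ for every $m\ge 1$ and every $t>0$, with $t\mapsto\eta(t)$ of class $C^\infty$ from $(0,\infty)$ into each $D(A_2^m)$ and $\partial_t^{\,j}\eta=(-A_2)^j\eta$; in particular $\eta\in C^1((0,\infty),\mathcal{H}_2)$. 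Because $k$ is bounded below, $A_2\eta\in\mathcal{H}_2$ is equivalent to $\Delta\eta\in L^2(\Omega)$, whence $D(A_2)=H^2(\Omega)\cap\mathcal{H}_3$ by the $H^2$-elliptic regularity of the Dirichlet Laplacian. This gives $\eta\in C((0,\infty),H^2(\Omega)\cap\mathcal{H}_3)$, while the semigroup is strongly continuous down to $t=0$ in $\mathcal{H}_2$; I would note that, for data merely in $L^2$, the $H^2\cap\mathcal{H}_3$-valued continuity can only be asserted on the open interval $(0,\infty)$.

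For the smoothness statement (\ref{rel1}) I would bootstrap: since $\eta(t)\in D(A_2^m)$ for all $m$, and both $k$ and $\partial\Omega$ are smooth, iterated elliptic regularity for the uniformly elliptic, smooth-coefficient operator $k\Delta$ — equivalently for $\Delta$, as $k$ is smooth and bounded away from $0$ — places $\eta(t)\in H^{2m}(\Omega)$ for every $m$, and in dimension $n=2$ the Sobolev embedding gives $\eta(t)\in C^\infty(\overline{\Omega})$. Combining this with $\partial_t^{\,j}\eta=(-A_2)^j\eta$, each of which is again spatially $C^\infty$ and whose semigroup bounds are uniform on $[\varepsilon,\infty)$, yields the joint regularity $\eta\in C^\infty([\varepsilon,\infty)\times\overline{\Omega})$ for every $\varepsilon>0$. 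Finally, for the energy balance I would pair the equation with $\eta$ in $\mathcal{H}_2$ on $(0,T]$, where the regularity above justifies the manipulations, obtaining
\begin{equation*}
\tfrac{1}{2}\frac{d}{dt}\|\eta\|^2_{\mathcal{H}_2}=-\langle A_2\eta,\eta\rangle_{\mathcal{H}_2}=-\int_\Omega|\nabla\eta|^2\,dx=-\|\nabla\eta\|^2_{\mathcal{H}_1},
\end{equation*}
and integrating over $[0,T]$, extending to $t=0$ by continuity, gives the identity (\ref{rel2}); I would remark that the natural left-hand term is $\tfrac12\|\eta(T)\|^2_{\mathcal{H}_2}$ rather than the $\mathcal{H}_1$-norm printed there. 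Letting $T\to\infty$ bounds $\int_0^\infty\|\nabla\eta\|^2_{\mathcal{H}_1}\,dt\le\tfrac12\|\eta_0\|^2_{\mathcal{H}_2}$, and together with the uniform estimate $\|\eta(t)\|_{\mathcal{H}_2}\le\|\eta_0\|_{\mathcal{H}_2}$ this yields $\eta\in L^2((0,\infty),\mathcal{H}_3)$. I expect the main obstacle to be the up-to-the-boundary smoothing (\ref{rel1}): transferring the abstract membership $\eta(t)\in D(A_2^m)$ into classical $C^\infty(\overline{\Omega})$ regularity requires smoothness of $\partial\Omega$ and of $k$, and care that the variable-coefficient elliptic estimates for $-k\Delta$ reduce to those for $-\Delta$ because $k$ is smooth and uniformly positive; a secondary subtlety is the rigorous justification of the energy identity for initial data only in $L^2$, which I would handle by working on $(0,T]$ and passing to the limit $t\downarrow 0$ using strong continuity in $\mathcal{H}_2$.
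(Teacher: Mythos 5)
Your proposal follows essentially the same route as the paper: self-adjointness and maximal monotonicity of $A_2=-k\Delta$ on the weighted space $\mathcal{H}_2$, the heat-equation regularity theory of \cite{brezis} (Theorems 7.7 and 10.1) for the semigroup smoothing, and the identical energy computation $\frac{1}{2}\frac{d}{dt}\|\eta\|^2_{\mathcal{H}_2}=\langle\eta,k\Delta\eta\rangle_{\mathcal{H}_2}=\langle\eta,\Delta\eta\rangle_{\mathcal{H}_1}=-\|\nabla\eta\|^2_{\mathcal{H}_1}$ integrated on $[\varepsilon,T]$ and passed to the limit $\varepsilon\downarrow 0$. You supply more detail than the paper on the spectral smoothing and elliptic bootstrapping (which the paper defers to \cite{brezis}), and your observation that the left-hand side of (\ref{rel2}) should read $\frac{1}{2}\|\eta(T)\|^2_{\mathcal{H}_2}$ is consistent with what the paper's own proof actually derives, namely $\sigma(T)=\frac{1}{2}\|\eta(T)\|^2_{\mathcal{H}_2}$.
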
 
\begin{proof}
	We only show the energy estimates as the remaining part of the proof follows very closely to the proof of theorem 10.1 in \cite{brezis}. It is clear that the operator $A_2$ is symmetric and maximal monotone. Hence it is self-adjoint. Therefore, by Theorem $7.7$  in \cite{brezis}, we have
	\begin{align*}
		\eta\in C^1((0,\infty),\mathcal{H}_2)\cap C([0,\infty),H^2(\Omega)\cap \mathcal{H}_3).
	\end{align*}
	Define $\sigma(t)=\frac{1}{2}\|\eta(t)\|^2_{\mathcal{H}_2}$. Since $\eta\in C^1((0,\infty),\mathcal{H}_2)$ it is clear that $\sigma$ is $C^1$ on $(0,\infty)$. Therefore
	\begin{align*}
		\sigma'(t)&=\Big\langle \eta(t),\frac{d\eta}{dt}(t)\Big\rangle_{\mathcal{H}_2}\\
		&=\Big\langle \eta(t),k\Delta \eta\Big\rangle_{\mathcal{H}_2}\\
		&=\Big\langle \eta(t),\Delta \eta\Big\rangle_{\mathcal{H}_1}\\
		&=-\|\nabla \eta(t)\|^2_{\mathcal{H}_1}.
	\end{align*}
	Integrating from $\varepsilon$ to $T$ where $0<\varepsilon<T<\infty$ we get
	\begin{equation*}
		\sigma(T)-\sigma(\varepsilon)=-\int_\varepsilon^T \|\nabla \eta(t)\|_{\mathcal{H}_1}^2\: dt.
	\end{equation*} 
	Again as $\eta\in C((0,\infty),\mathcal{H}_2(\Omega))$ we have $\sigma(\varepsilon)\to\sigma(0)=\frac{1}{2}\|\eta_0\|^2_{\mathcal{H}_2}$ as $\varepsilon\to 0$. Therefore in the limiting case we obtain
	\begin{equation*}
		\sigma(T)+\int_0^T \|\nabla \eta(t)\|_{\mathcal{H}_1}^2\:dt=\frac{1}{2}\|\eta_0\|_{\mathcal{H}_2}^2,
	\end{equation*}
	and (\ref{rel2}) holds. Integrating the Hilbert space $(H^1_0)_k(\Omega)$ norm defined above from $0$ to $T$ we get
	\begin{align*}
		\int_0^T\|\eta(t)\|^2_{(H^1_0)k}\:dt&=\int_0^T\|\eta(t)\|^2_{\mathcal{H}_2}\:dt+\int_0^T\|\nabla \eta(t)\|_{\mathcal{H}_1}^2\:dt\\
		&=2\int_0^T \sigma(t)\:dt+\int_0^T\|\nabla \eta(t)\|_{\mathcal{H}_1}^2\:dt\\
		&\leq 2\int_0^T \sigma(t)\:dt+ \frac{1}{2}\|\eta_0\|_{\mathcal{H}_2}^2.
	\end{align*}
	This shows that
	\begin{equation*}
		\eta\in L^2((0,\infty),\mathcal{H}_3).\qedhere
	\end{equation*}
\end{proof}

\section{A Special Case: Approximating the Continuity Equation Model}
\label{sec:7}
In this section we show that for a specific choice of the additional constraint $\phi(f)=f^2$, $\psi = (\nabla\cdot\textbf{u})^2$, our model closely approximates the CEC based model. We justify this theoretically using the modified augmented Lagrangian framework.

\subsection{The Augmented Lagrangian Framework}
Let $\mathcal{V}=H^1(\Omega)\times H^1(\Omega)$ and $\mathcal{H}=L^2(\Omega)$ denote the Hilbert spaces with the respective norms $\|\cdot\|_{\mathcal{V}},\|\cdot\|_{\mathcal{H}}$. For simplicity, we fix $\alpha=\beta=1$. Recall that our refinement model evolves over an initial estimate for which the pixel correspondence problem is already solved upto a certain level. This means that the optical flow constraint (OFC) is satisfied by the flow estimate. Taking this into account we recast the variational problem to a constrained minimization problem:

\begin{equation}
	\min_{\textbf{u}} J_{\text{R}}(\textbf{u})=\int_\Omega (f\nabla\cdot\textbf{u})^2+\int_\Omega (|\nabla u|^2+|\nabla v|^2),
	\label{varp}
\end{equation}
subject to the constraint 
\begin{equation}
	B\textbf{u}:=\nabla f\cdot\textbf{u}=-f_t=:c.
	\label{cons}
\end{equation}
where $\mu_R >0$ and $\lambda_1\in \mathcal{H}$ is the Lagrange multiplier. The associated augmented Lagrangian for the problem (\ref{varp})-(\ref{cons}) is:

\begin{equation}
	\mathcal{L}_{\mu_R}(\textbf{u},\lambda_1)=J_R(\textbf{u})+\frac{\mu_R}{2}\|B\textbf{u}-c\|^2+\langle\lambda_1,B\textbf{u}-c\rangle.
	\label{auglag7}
\end{equation}
Since OFC is a divergence-free approximation of the continuity equation data term, a similar constrained minimization problem can be considered:

\begin{equation}
	\min_{\textbf{u}} J_{\text{C}}(\textbf{u})=\int_\Omega (f_t+\nabla\cdot(f\textbf{u}))^2+\int_\Omega (|\nabla u|^2+|\nabla v|^2),
	\label{varp1}
\end{equation}
subject to the constraint 
\begin{equation}
	B\textbf{u}=c.
	\label{cons1}
\end{equation}
The associated augmented Lagrangian for the problem (\ref{varp1})-(\ref{cons1}) is:

\begin{equation}
	\mathcal{L}_{\mu_C}(\textbf{u},\lambda_1)=J_C(\textbf{u})+\frac{\mu_C}{2}\|B\textbf{u}-c\|^2+\langle\lambda_1,B\textbf{u}-c\rangle.
	\label{auglag3}
\end{equation}
We observe that
\begin{equation*}
	J_{\text{C}}(\textbf{u})=J_{\text{HS}}(\textbf{u})+J_{\text{R}}(\textbf{u})+K(\textbf{u}),
\end{equation*}
where $J_{\text{HS}}$ denotes the Horn and Schunck functional \cite{Horn} and
\begin{equation*}
	K(\textbf{u})=2\int_{\Omega}(B \textbf{u}-c)(f\nabla\cdot \textbf{u}).
\end{equation*}
By the Cauchy-Schwarz inequality we have $|K(\textbf{u})|^{2}\leq 2\|B\textbf{u}-c\|^2_{L^2}\|f\nabla\cdot \textbf{u}\|^{2}_{L^2}$. Thus, 
\begin{equation}
	J_{\text{C}}(\textbf{u})=J_{\text{R}}(\textbf{u})+O(\sqrt{\epsilon})\quad \text{ whenever }\quad \|B\textbf{u}-c\|_{L^2}=\textnormal{O}(\epsilon).
	\label{ceapprox}
\end{equation}
Heuristically arguing, we can take motivation from (\ref{ceapprox}) and adopt a two-phase strategy where we first determine the minimizer of $J_{\text{HS}}$ and use this minimizer as an initial condition in the evolutionary PDE associated with (\ref{fun}). Although our model is not derived by rigorous fluid mechanics, we still demonstrate that our results closely approximate the physics-based models for this particular choice of the functions $\phi$ and $\psi$. The first step is to show the equivalence of the variational problem with the associated saddle point problem, see \cite[Chapter 3]{glowinski} for further discussions.
\begin{lemma}
	$(\textbf{u},\lambda)$ is a saddle point of (\ref{auglag3}) iff $\textbf{u}$ solves the variational problem (\ref{varp1})-(\ref{cons1}).
	\label{1}
\end{lemma}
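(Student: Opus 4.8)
The plan is to prove the two implications separately, exploiting the structure of the augmented Lagrangian (\ref{auglag3}) together with the convexity of the quadratic functional $J_C$. Throughout I write $B^{*}:\mathcal{H}\to\mathcal{V}'$ for the adjoint of the bounded linear constraint operator $\textbf{v}\mapsto B\textbf{v}=\nabla f\cdot\textbf{v}$, and $J_C'$ for the G\^{a}teaux derivative of $J_C$.

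First I would establish the direction "saddle point $\Rightarrow$ minimizer." Assume $(\textbf{u},\lambda)$ satisfies (\ref{saddle}). The left inequality $\mathcal{L}_{\mu_C}(\textbf{u},\lambda_1)\le\mathcal{L}_{\mu_C}(\textbf{u},\lambda)$, after cancelling the terms independent of the multiplier, reduces to $\langle\lambda_1-\lambda,B\textbf{u}-c\rangle\le 0$ for every $\lambda_1\in\mathcal{H}$. Since $\lambda_1$ ranges over the whole Hilbert space, the substitution $\lambda_1=\lambda\pm tw$ with $t>0$ and arbitrary $w\in\mathcal{H}$ forces $\langle w,B\textbf{u}-c\rangle=0$, hence $B\textbf{u}=c$; that is, $\textbf{u}$ is feasible. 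Feasibility makes both the penalty and the multiplier term vanish at $\textbf{u}$, so $\mathcal{L}_{\mu_C}(\textbf{u},\lambda)=J_C(\textbf{u})$. Restricting the right inequality $\mathcal{L}_{\mu_C}(\textbf{u},\lambda)\le\mathcal{L}_{\mu_C}(\textbf{v},\lambda)$ to feasible $\textbf{v}$, where the extra terms again vanish, then yields $J_C(\textbf{u})\le J_C(\textbf{v})$ for all $\textbf{v}$ with $B\textbf{v}=c$, so $\textbf{u}$ solves (\ref{varp1})-(\ref{cons1}).

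For the converse "minimizer $\Rightarrow$ saddle point," let $\textbf{u}$ solve the constrained problem, so that $B\textbf{u}=c$ and $J_C(\textbf{u})\le J_C(\textbf{v})$ on the feasible set. The left inequality of (\ref{saddle}) is then immediate, since $B\textbf{u}-c=0$ gives $\mathcal{L}_{\mu_C}(\textbf{u},\lambda_1)=J_C(\textbf{u})=\mathcal{L}_{\mu_C}(\textbf{u},\lambda)$ for every $\lambda$ and every $\lambda_1$. The substance is the right inequality, for which I must produce a suitable multiplier. Here I would invoke first-order optimality (KKT) theory for the affinely constrained convex problem: because $J_C$ is a convex quadratic functional on $\mathcal{V}$ and $B$ is bounded and linear from $\mathcal{V}$ into $\mathcal{H}$, there exists $\lambda\in\mathcal{H}$ satisfying the stationarity relation $J_C'(\textbf{u})+B^{*}\lambda=0$, see \cite[Chapter 3]{glowinski}. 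With this $\lambda$ fixed, set $g(\textbf{v}):=\mathcal{L}_{\mu_C}(\textbf{v},\lambda)$. The functional $g$ is convex, being the sum of the convex quadratic $J_C$, the convex penalty $\tfrac{\mu_C}{2}\|B\textbf{v}-c\|^2$, and the affine term $\langle\lambda,B\textbf{v}-c\rangle$; hence any critical point of $g$ is automatically a global minimizer, and it suffices to check $g'(\textbf{u})=0$. Computing the derivative gives $g'(\textbf{u})=J_C'(\textbf{u})+\mu_C B^{*}(B\textbf{u}-c)+B^{*}\lambda$, where the middle term vanishes because $B\textbf{u}=c$ and the remaining two cancel by stationarity. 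Thus $\textbf{u}$ minimizes $g$, which is precisely the right inequality of (\ref{saddle}), completing the saddle point.

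The main obstacle will be the existence of the Lagrange multiplier $\lambda$ in the second implication; this is not merely formal but rests on the convexity of $J_C$ together with a constraint qualification guaranteeing that $B$ has closed range, equivalently that the normal cone to the affine feasible set is generated by $B^{*}$. I would therefore verify at the outset that, under the smoothness assumed on $f$, the map $B\textbf{v}=\nabla f\cdot\textbf{v}$ is indeed a bounded linear operator from $\mathcal{V}=H^1(\Omega)\times H^1(\Omega)$ into $\mathcal{H}=L^2(\Omega)$ with the feasible set nonempty, so that the abstract saddle point existence theorem of \cite[Chapter 3]{glowinski} applies. The forward implication and the convexity computations in the converse are then routine.
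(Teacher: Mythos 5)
Your argument is the standard saddle-point equivalence and is correct; the paper itself gives no proof of this lemma, deferring entirely to \cite[Chapter 3]{glowinski}, where essentially your argument appears (deriving feasibility $B\textbf{u}=c$ from the left inequality, restricting the right inequality to the feasible set for one direction, and convexity plus a KKT-type multiplier for the converse). The single point you rightly flag as nontrivial --- existence of $\lambda\in\mathcal{H}$ with $J_C'(\textbf{u})+B^{*}\lambda=0$, which requires a constraint qualification such as closed range of $B\textbf{v}=\nabla f\cdot\textbf{v}$ --- is also left unaddressed by the paper, so your writeup is at least as complete as the source.
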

Observe that the augmented Lagrangian (\ref{auglag3}) can be reformulated as
\begin{equation}
	\mathcal{L}_{\mu_C}(\textbf{u},\lambda_1)=J_R(\textbf{u})+\frac{\mu_C}{2}\|B\textbf{u}-c\|^2+\langle\lambda_1+2f\nabla\cdot\textbf{u},B\textbf{u}-c\rangle.
	\label{auglag4}
\end{equation}
The parameters $\mu_C, \mu_R$ can be chosen as large as necessary. The lagrange multiplier $\lambda_1$ which acts as a dual variable is obtained by the Uzawa iteration
\begin{equation}
	\lambda_1^{(n+1)}=\lambda_1^{(n)}+2fd^{(n)}+\rho^{(n)}(B\textbf{u}^{(n)}-c),
	\label{uzawa}
\end{equation}
where $d^{(n)}=\nabla\cdot \textbf{u}^{(n)}$ and $\rho^{(n)}$ is a tuning parameter. To show the equivalence of the two unconstrained optimization problems it is necessary to show that the Lagrange multipliers \{$\lambda_1^{(n)}$\} converge. For this, we rely upon the techniques of bounded constraint algorithm, see \cite[Chapter 17]{nocedal} for more details.

\subsection{The Bounded Constraint Algorithm}
The starting point is the crude pixel correspondence obtained from the Horn and Schunck optical flow $\textbf{u}^{(0)}$ obtained within a tolerance limit $\delta_{\text{HS}}$ prescribed by Liu-Shen \cite{Liu3}. Observe that when the optical flow constraint is exactly satisfied then the formulations (\ref{varp}) and (\ref{varp1}) coincide. However in reality due to numerical errors and approximations, the constraints are never exactly met. Thus it is natural to look at the equality constraint $B\textbf{u}=c$ as a bounded constraint $\|B\textbf{u}-c\|_{\mathcal{H}}\le \epsilon_1^{(n)}$ where $\epsilon_1^{(n)}$ is a threshold parameter.
\begin{algorithm}[H]
	\caption{Bounded Constraint Algorithm}\label{BCA}
	\begin{algorithmic}[1]
		\State Set $\lambda^{(0)},\rho^{(0)}$. Choose $\epsilon_1^{(0)},\epsilon_2^{(0)}$.
		\State Obtain initial HS optical flow $\textbf{u}^{(0)}$
		\For{$n = 1,2,\dots$ until convergence \textbf{do}}
		\State update $\textbf{u}^{(n)}, d^{(n)}$
		\If $\|B\textbf{u}^{(n)}-c\|_{\mathcal{H}}\le \max\{\epsilon_1^{(n)},2\delta_{\text{HS}}\}$ 
		\If $\|fd^{(n)}\|_{\mathcal{H}}\le  \epsilon_2^{(n)}$
		\State break;
		\Else
		\State update $\lambda_1^{(n)}$ by (\ref{uzawa})
		\State $\rho^{(n+1)}\gets \rho^{(n)}$
		\State tighten tolerances $\epsilon_1^{(n+1)},\epsilon_2^{(n+1)}$
		\EndIf
		\Else 
		\State update $B\textbf{u}^{(n)}-c$
		\State $\lambda^{(n+1)}\gets\lambda^{(n)}$
		\State $\rho^{(n+1)}\gets 100\rho^{(n)}$
		\State tighten tolerances $\epsilon_1^{(n+1)},\epsilon_2^{(n+1)}$
		\EndIf
		\EndFor
	\end{algorithmic}
\end{algorithm}
The algorithm can be viewed in two phases. In the first phase, it is purely a diffusion process while in the second phase, the bounded constraint approximates the continuity equation constraint.This happens because a part of the OFC is already embedded in the CEC. The relaxation $\delta_{\text{HS}}$ is allowed so that we do not move too far away from the constraint and to ensure that the tolerance $\epsilon_1^{(n)}$ doesn't become too small.

\subsection{Description of the Bounded Constraint Algorithm}

The initial Horn and Schunck (HS) estimate $\textbf{u}^{(0)}$ first obtained. The updates $\textbf{u}^{(n)}$ at step 4 are obtained by discretizing the Euler-Lagrange equations:
\begin{equation}
	\textbf{P}\textbf{u}^{(n+1)}=\textbf{b}\textbf{u}^{(n)},
	\label{discrete}
\end{equation}
where
\begin{equation}
	\begin{split}
		\textbf{P}:= \begin{bmatrix}
			\alpha+\frac{2\beta f^2}{{\Delta x}^2} & 0 \\[0.3cm]
			0 & \alpha+\frac{2\beta f^2}{{\Delta y}^2}
		\end{bmatrix}, \quad
		\textbf{b}\textbf{u}^{(n)}=\begin{bmatrix}
			\alpha (M*u^{(n)})+\beta\frac{\partial}{\partial x}[f^2d^{(n)}]\\[0.3cm]
			\alpha (M*v^{(n)})+\beta\frac{\partial}{\partial y}[f^2d^{(n)}]
		\end{bmatrix},
	\end{split}
\end{equation}
$\Delta x,\Delta y$ are grid step sizes and $M$ is the nine-point approximation stencil of the laplacian. $d^{(n)}$ is updated by the relation \[d^{(n)}=S(t)d^{(n-1)},\] where the map $S(t): u_0\mapsto u(t):= G_k(\cdot,t)*u_0$ form a linear, continuous semigroup of contractions in $\mathcal{H}$, $G_k$ is the diffusion kernel associated with the operator $k\Delta$. This semigroup structure allows the process to preserve the spatial characteristics of the divergence and the vorticities.

The convergence criteria mentioned in steps 5 and 6 are checked subsequently. If both criteria are satisfied then the required approximation is met and the algorithm terminates. If step 6 is not satisfied, then we enter the inner iterations and run through steps 9 to 11, where the Lagrange multiplier $\lambda_1$ is updated while the penalty parameter $\mu_C$ is not modified. 

The outer iteration steps 13 to 16 are executed when the convergence criteria in step 5 fail. In this case, the optical flow constraint is first updated. Then, to preserve the convergence and to ensure no occurrence of spurious updates happens, the Lagrange multiplier is assigned the value from the previous iteration. The tuning parameter is updated with a higher penalty to ensure that the iterates remain within bounds.

\subsection{Convergence of the Uzawa Iterates}
So far we have discussed how the modified augmented Lagrangian formulation is employed to show the equiavlence of the two saddle point problems using the techniques of the Bounded Constraint Algorithm. The discussion is complete only when we prove the convergence of the Uzawa iterates (\ref{uzawa}). Using the Bounded Constraint Algorithm and the decoupling principle we show the following.
\begin{lemma}
	The Uzawa iterates can be shown to satisfy the bounds
	\begin{equation*}
		\|\lambda_1^{(n+1)} -\lambda_1^{(0)}\|_{\mathcal{H}}\le 2M_1\|f\|_{L^\infty}\|d^{(0)}\|_{\mathcal{H}} + r
	\end{equation*}
	where
	\begin{equation*} r=\max\Bigg\{\frac{\pi^2}{6}mCM,2\delta_{\text{HS}}\Bigg\}.
	\end{equation*}
\end{lemma}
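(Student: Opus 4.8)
The plan is to telescope the Uzawa recursion (\ref{uzawa}) and then bound the two resulting sums separately, using the contraction semigroup for the divergence part and the algorithmic tolerance schedule for the constraint residual. First I would observe that along the Bounded Constraint Algorithm the multiplier is genuinely updated only at those inner iterations where step $5$ holds and step $6$ fails; in the complementary outer branch one has $\lambda_1^{(n+1)}=\lambda_1^{(n)}$, so these iterations contribute nothing to the displacement. Summing (\ref{uzawa}) over the update indices $j=0,\dots,n$ gives
\begin{equation*}
	\lambda_1^{(n+1)}-\lambda_1^{(0)}=\sum_{j=0}^{n}\Big(2fd^{(j)}+\rho^{(j)}(B\textbf{v}^{(j)}-c)\Big),
\end{equation*}
and the triangle inequality splits the estimate into a divergence part $2\sum_j\|fd^{(j)}\|_{\mathcal{H}}$ and a residual part $\sum_j\rho^{(j)}\|B\textbf{v}^{(j)}-c\|_{\mathcal{H}}$.

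For the divergence part I would use the update rule $d^{(n)}=S(t)d^{(n-1)}$ together with the semigroup property to write $d^{(j)}=S(t)^{j}d^{(0)}=S(jt)d^{(0)}$. Since $-k\Delta$ with homogeneous Dirichlet data on the bounded domain $\Omega$ has a strictly positive first eigenvalue, the contraction semigroup generated by $k\Delta$ is in fact a \emph{strict} contraction, $\|S(t)\|_{\mathcal{H}\to\mathcal{H}}\le e^{-\mu t}<1$ for $t>0$. Hence $\|d^{(j)}\|_{\mathcal{H}}\le e^{-\mu t j}\|d^{(0)}\|_{\mathcal{H}}$ and, bounding $\|fd^{(j)}\|_{\mathcal{H}}\le\|f\|_{L^\infty}\|d^{(j)}\|_{\mathcal{H}}$,
\begin{equation*}
	2\sum_{j\ge 0}\|fd^{(j)}\|_{\mathcal{H}}\le 2\|f\|_{L^\infty}\|d^{(0)}\|_{\mathcal{H}}\sum_{j\ge 0}e^{-\mu t j}=2M_1\|f\|_{L^\infty}\|d^{(0)}\|_{\mathcal{H}},
\end{equation*}
with $M_1=(1-e^{-\mu t})^{-1}$. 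This produces the first term of the claimed bound.

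For the residual part I would invoke the stopping geometry of step $5$: whenever the multiplier is updated, $\|B\textbf{v}^{(j)}-c\|_{\mathcal{H}}\le\max\{\epsilon_1^{(j)},2\delta_{\text{HS}}\}$. Two regimes then appear. When the prescribed tolerance dominates, the tightening schedule $\epsilon_1^{(j)}=O(m/j^{2})$ combined with the discrete diffusion step --- controlled by the kernel estimate of the preceding Lemma (constant $C$) and the nine-point stencil bound $M$ --- yields a per-iteration bound of order $mCM/j^{2}$, so that $\sum_{j\ge1}mCM/j^{2}=\tfrac{\pi^{2}}{6}mCM$ by $\sum_{j\ge1}j^{-2}=\pi^2/6$. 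When instead the Horn--Schunck floor dominates, the residual cannot be driven below the Liu--Shen tolerance and the relevant bound is $2\delta_{\text{HS}}$. Taking the larger of the two gives $r=\max\{\tfrac{\pi^{2}}{6}mCM,\,2\delta_{\text{HS}}\}$, and adding the divergence part yields the statement.

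The main obstacle is the second sum. One has to show that the product $\rho^{(j)}\|B\textbf{v}^{(j)}-c\|_{\mathcal{H}}$ stays summable even though the penalty parameter is inflated by the factor $100$ in the outer branch; the key point is that $\rho$ is frozen precisely on the update iterations (step $10$) while the tolerances keep tightening, and that the divergence-free approximation keeps the residual within the $2\delta_{\text{HS}}$ floor, so that $\rho$ stays reasonably small as argued after (\ref{discrete}). Establishing the strict contraction estimate $\|S(t)\|<1$ (the spectral gap for the image-weighted operator $k\Delta$ on $\mathcal{H}$) is the other technical point, but it follows from $a_1\le k\le a_2$ and the Poincar\'e inequality on $\Omega$. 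Once these two facts are in place, the telescoping together with the two elementary summations completes the proof.
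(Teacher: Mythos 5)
Your proposal follows essentially the same route as the paper: telescope the Uzawa recursion, bound the residual sum via the tightening tolerances $\epsilon_1^{(j)}\le M/(j+1)^2$ against the $2\delta_{\text{HS}}$ floor to get $r$, and sum the divergence terms as a Neumann/geometric series of the semigroup to produce $M_1$. Your one genuine improvement is justifying $\|S(t)\|<1$ strictly via the spectral gap of $k\Delta$ (Poincar\'e inequality), which is actually needed for $\sum_i S(t)^i$ to converge to $(I-S(t))^{-1}$; the paper asserts this from contractivity alone, which by itself would not suffice.
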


\begin{proof}
	Set $n=i$ in Equation (\ref{uzawa})
	\begin{equation*}
		\lambda_1^{(i+1)} -\lambda_1^{(i)} = 2fd^{(i)} + \rho^{(i)}(B\textbf{u}^{(i)}-c),\quad 1\le i\le n.
	\end{equation*}
	Adding the $n$ equations we obtain
	\begin{align*}
		\begin{split}
			\lambda_1^{(n+1)} -\lambda_1^{(0)} &= 2f[d^{(n)}+\dots+ d^{(0)}]+\rho^{(n)}(B\textbf{u}^{(n)}-c)+\dots
			+\rho^{(0)}(B\textbf{u}^{(0)}-c)
		\end{split}\\
		\begin{split}
			&=2f[S(t)^nd^{(0)}+\dots+d^{(0)}]+\rho^{(n)}(B\textbf{u}^{(n)}-c)+\dots
			+\rho^{(0)}(B\textbf{u}^{(0)}-c).
		\end{split}
	\end{align*}
	Therefore,
	\begin{equation*}
		\lambda_1^{(n+1)} -\lambda_1^{(0)}=2f\Big[\sum_{i=0}^n S(t)^i\Big]d^{(0)}+\sum_{i=0}^n \rho^{(i)}(B\textbf{u}^{(i)}-c).
	\end{equation*}
	Hence,
	\begin{equation}
		\|\lambda_1^{(n+1)} -\lambda_1^{(0)}\|_{\mathcal{H}}\le 2\|f\|_{L^\infty}\Bigg|\Big[\sum_{i=0}^n S(t)^i\Big]d^{(0)}\Bigg|+\sum_{i=0}^n |\rho^{(i)}|\|B\textbf{u}^{(i)}-c\|_{\mathcal{H}}.
		\label{lam}
	\end{equation}
	Let us first consider the second sum in (\ref{lam}). Following the algorithm we note that $\|B\textbf{u}^{(n)}-c\|_{\mathcal{H}}\le C\epsilon_1^{(n)}$. Since the tolerance limit $\epsilon_1^{(n)}$ is tightened after every update, there exists a $N$ such that for $n>N$ we can choose a $M>0$ such that $\epsilon_1^{(n)}\le M/(n+1)^2$ as long as $\epsilon_1^{(n)}>2\delta_{\text{HS}}$. Thus
	\[
	\|B\textbf{u}^{(n)}-c\|_{\mathcal{H}}\le C\frac{M}{(n+1)^2}.
	\]
	Also as step (10) of the algorithm suggests the tuning parameter $\rho^{(n+1)}$ is assigned the value of the previous iteration $\rho^{(n)}$. This value is fixed as long as the steps (9)-(11) run. Let us denote this fixed value by $m$. Combining these discussions we obtain
	\begin{equation*}
		\sum_{i=0}^n |\rho^{(i)}|\|B\textbf{u}^{(i)}-c\|_{\mathcal{H}}\le mCM\sum_{i=0}^n\frac{1}{(i+1)^2},
	\end{equation*}
	which remains finite as $n$ becomes large. Now suppose it takes $n$ iterations where $\epsilon_1^{(n)}>2\delta_{\text{HS}}$. From the $(n+1)^{th}$ iteration when $\epsilon_1^{(n+j)}<2\delta_{\text{HS}}, j=1,2,\dots$ the upper bound becomes $2\delta_{\text{HS}}$. Combining we have 
	\begin{equation*}
		\sum_{i=0}^n |\rho^{(i)}|\|B\textbf{u}^{(i)}-c\|_{\mathcal{H}}\le r,
	\end{equation*}
	where
	\begin{equation*} r=\max\Bigg\{\frac{\pi^2}{6}mCM,2\delta_{\text{HS}}\Bigg\}.
	\end{equation*}
	which remains finite as $n\to \infty$. Since $S(t)$ is a contraction we observe that as $n\to\infty$ the series of operators in the first term of (\ref{lam}) converges to $(I-S(t))^{-1}$ which is a bounded operator. Thus we have $\|(I-S(t))^{-1}d^{(0)}\|\le M_1\|d^{(0)}\|$. Hence, in the limiting case, we have
	
	\begin{equation*}
		\|\lambda_1^{(n+1)} -\lambda_1^{(0)}\|_{\mathcal{H}}\le 2M_1\|f\|_{L^\infty}\|d^{(0)}\|_{\mathcal{H}} + r
	\end{equation*}
	which is finite. The shows the convergence of the multipliers $\lambda_1^{(n)}$.
\end{proof}

\section{Experiments and Results}
\label{sec:9}
A direct implementation of the single-phase continuity equation-based refinement requires a very large value of the regularization parameter along with a HS initialization for accelarating convergence and a pyramidal grid for a stable scheme. In this section, we show the results of our two-phase method compared to the continuity equation based method on different datasets.

\subsection{Experiments on PIV Dataset}
We first tested our algorithm on the oseen vortex pair (see Figure (\ref{f1})) and compared our results with the continuity equation model. The Oseen vortex pair is a synthetic PIV sequence of dimension $500\times 500$. The vorticies are placed centered at the positions $(166.7,250)$ and $(333.3,250)$. The circumferential velocity is given by $v_\theta=(\Gamma/2\pi r)[1-\exp(-r^2/r_0^2)]$ with the vortex strength $\Gamma=\pm 7000 \text{ pixels}^2/\text{s}$ and vortex core radius $r_0=15$ pixels. For more details see \cite{Liu2}.

\begin{figure}[H]%
	\centering
	\subfloat{{\includegraphics[width=5cm]{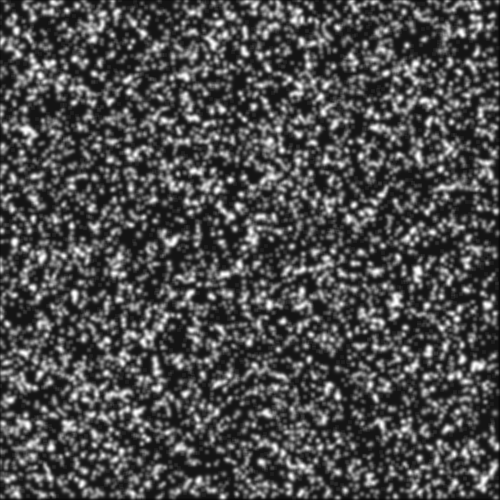}}}%
	\qquad
	\subfloat{{\includegraphics[width=5cm]{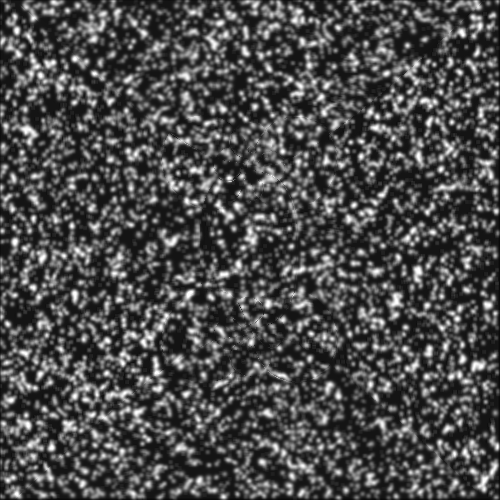}}}%
	\caption{Oseen vortex pair}%
	\label{f1}%
\end{figure}

\begin{figure}[H]%
	\hspace{-1.1cm}
	\subfloat[CEC with $L^2$ regularization]{{\includegraphics[width=9cm]{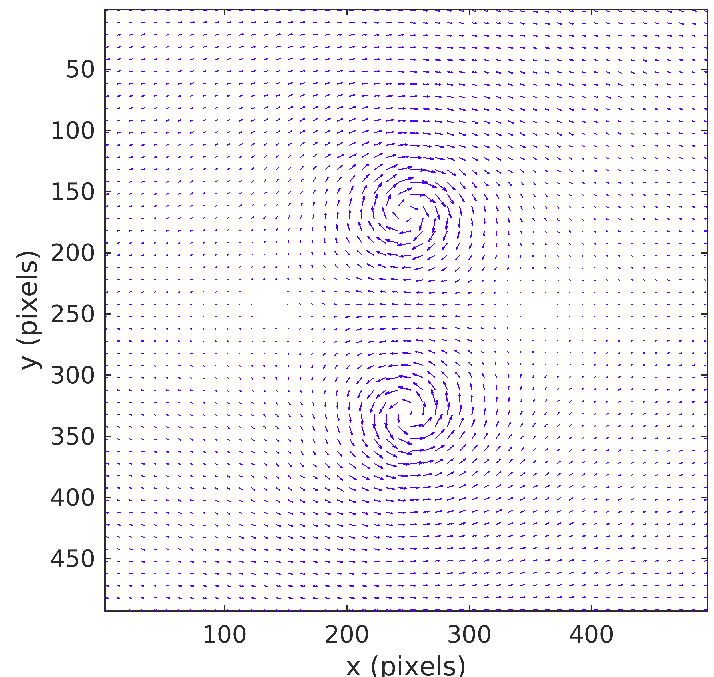}}}%
	\hspace{0.5cm}
	\subfloat[Our approach with $L^{2}$ regularizer]{{\includegraphics[width=9cm]{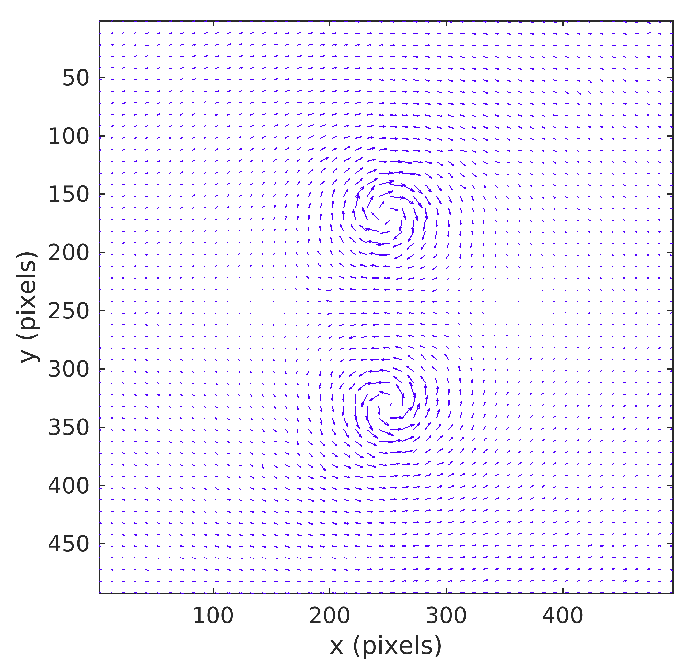}}}%
	\caption{Vorticity plot for the Oseen vortex pair}%
	\label{f2}%
\end{figure}

Figure (\ref{f2}) indicates that the vorticity plot obtained through our constraint-based refinement algorithm is very close to the continuity equation based model (CEC).

\begin{figure}[H]%
	\centering
	\subfloat{{\includegraphics[width=10cm]{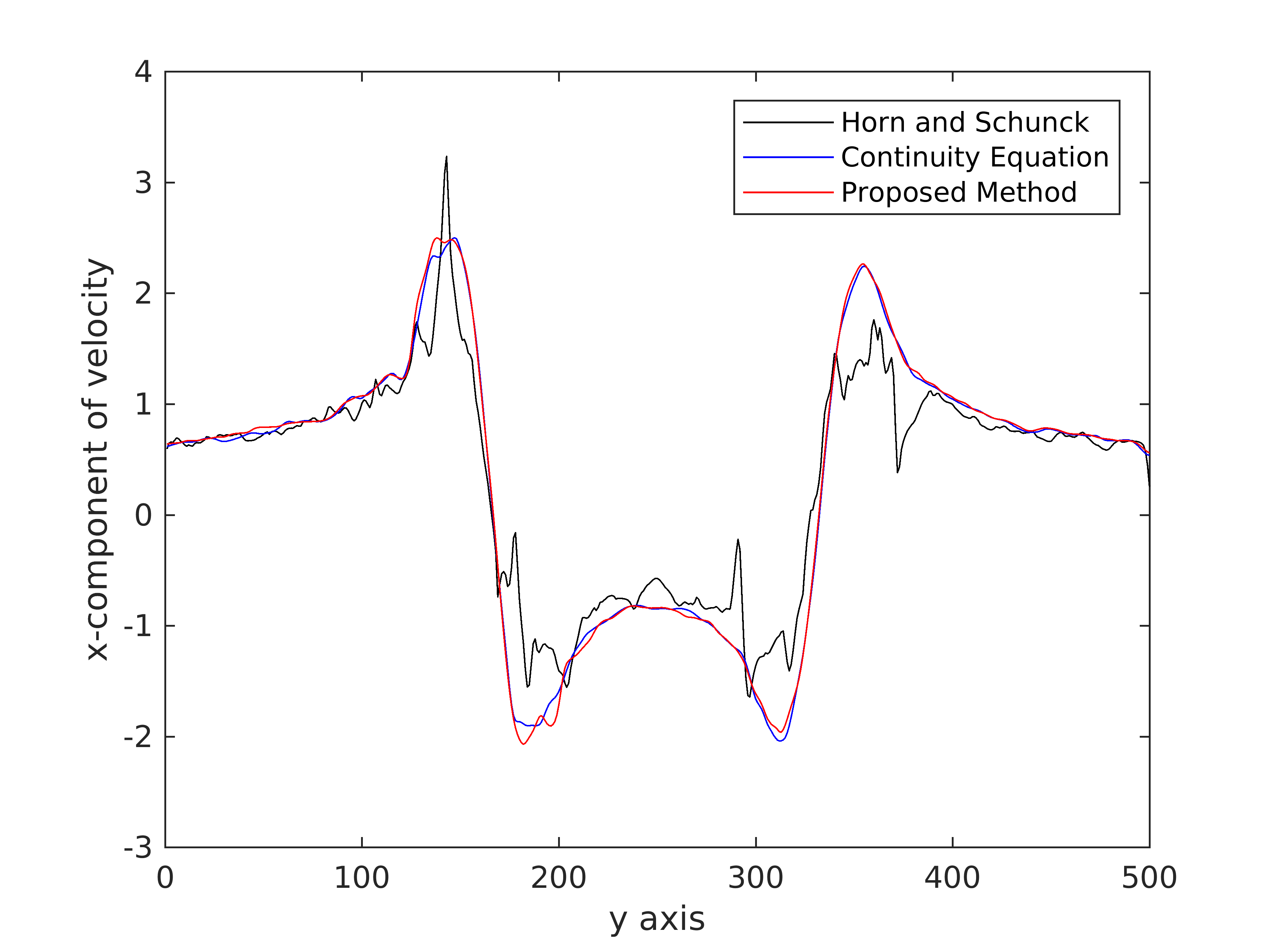}}}%
	\caption{Distribution of the $x$-component of the velocity extracted from the grid images for the oseen vortex pair}
	\label{f3}
\end{figure}

Performing a similar analysis as in \cite{Liu} we plot the distribution of the $x$-component of the velocity to obtain Figure (\ref{f3}). This plot compares the distribution of the $x$-component of the velocity extracted from the grid images for the HS model, CEC model and our constraint-based refinement model. The profiling clearly shows the closeness of our algorithm to the continuity equation based model. From the figure, it is also seen how the Horn and Schunck model underestimates the flow components, especially near the vortex cores.

For a better quantitative evaluation of our flow two-phase method, we consider additional PIV analytic flows \cite{fluid}. We show the results for the Poiseuille and Lamb-Oseen sequences, see \cite{fluid2} for the description of these datasets.

\begin{figure}[H]%
	\centering
	\subfloat[Ground-truth flow]{{\includegraphics[width=5.5cm]{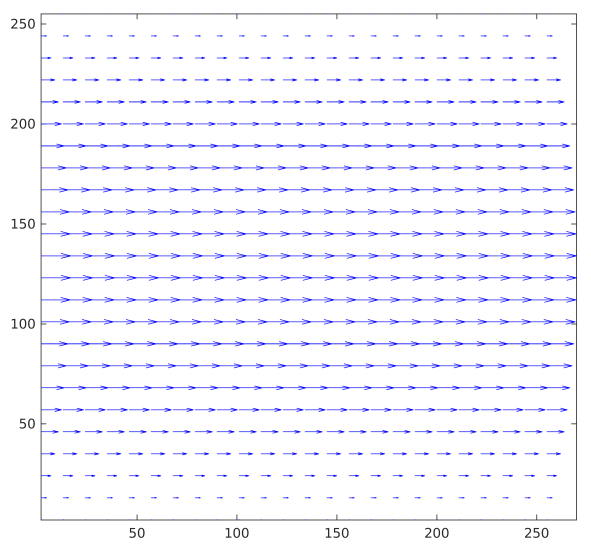}}}%
	\:
	\subfloat[CEC with $L^2$]{{\includegraphics[width=5.5cm]{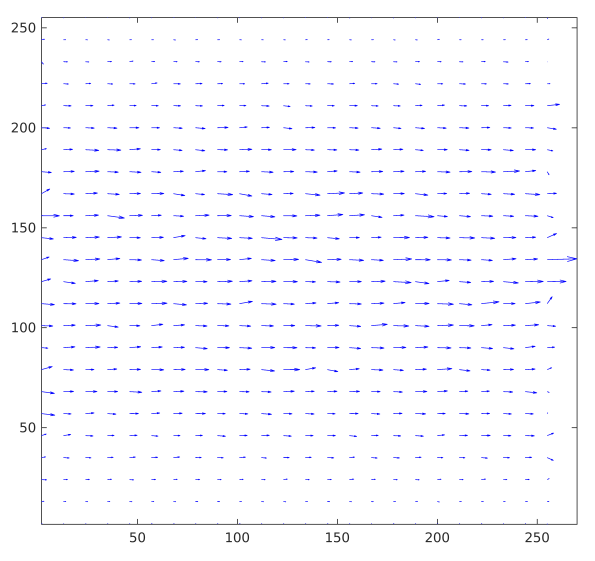}}}
	\:
	\subfloat[Our two-phase refinement]{{\includegraphics[width=5.5cm]{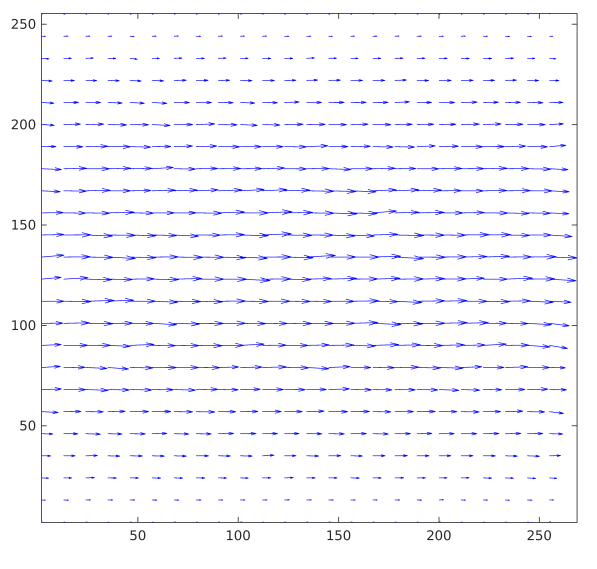}}}%
	\caption{Vorticity plot for the Poiseuille sequence}%
	\label{f11}%
\end{figure}

\begin{figure}[H]%
	\centering
	\subfloat[Ground-truth flow]{{\includegraphics[width=5.5cm]{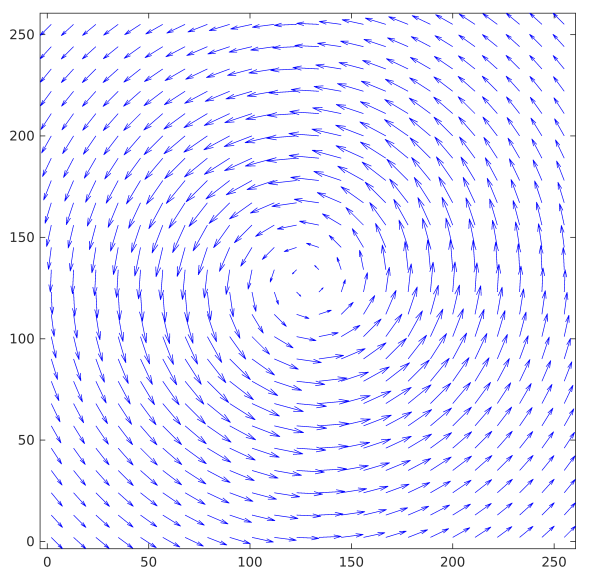}}}%
	\:
	\subfloat[CEC with $L^2$]{{\includegraphics[width=5.5cm]{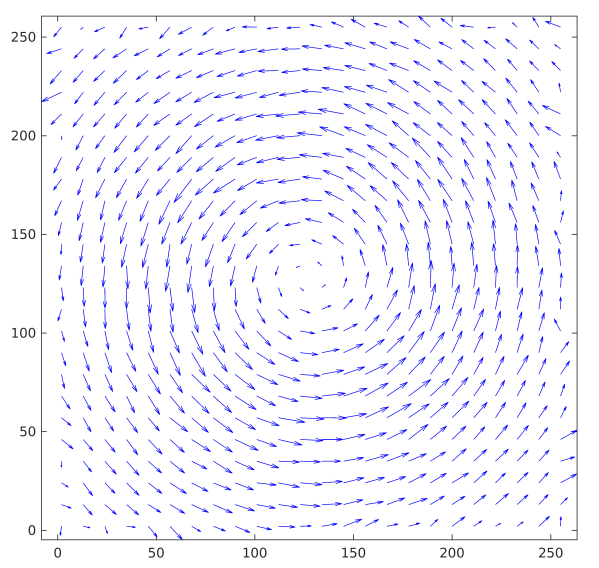}}}
	\:
	\subfloat[Our two-phase refinement]{{\includegraphics[width=5.5cm]{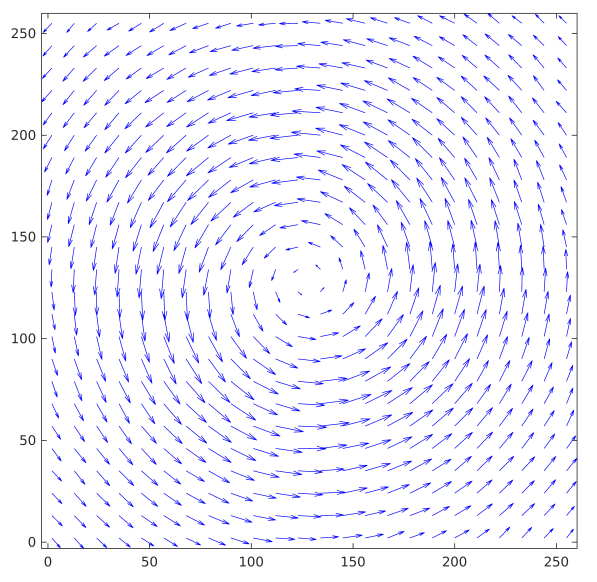}}}%
	\caption{Vorticity plot for the Lamb-Oseen sequence}%
	\label{f12}%
\end{figure}

\begin{table}[H]
	\renewcommand{\arraystretch}{1.5}
	\begin{adjustbox}{width=1.0\linewidth}
		\begin{tabular}{p{5.6cm}ccccc}
			\hline
			\multirow{2}{*}{Algorithm} & Poiseuille & Lamb-Oseen & Sink & Vortex & Potential flow \\
			& EPE  & EPE & EPE & EPE & EPE \\
			\hline
			\small CEC with $L^2$ regularization  & 0.181 & 0.847 & 0.031 & 0.265 & 1.260 \\
			\small Our Two-phase Refinement  &   \textbf{0.106} & \textbf{0.841} & \textbf{0.029} & \textbf{0.261} & \textbf{1.226} \\ \hline
			
		\end{tabular}
	\end{adjustbox}
	\caption{Comparison of the Average Angular Error (AAE) and End Point Error (EPE) for some PIV analytic flows.}
	\label{t2}
\end{table}

Figures \ref{f11} and \ref{f12} show the vorticity plot for the Poiseuille and the Lamb-Oseen sequences comapring both the methods with the ground-truth flow. Further, we report the End Point Error (EPE) for some of the PIV analytic flows in Table \ref{t2}. The EPE is computed as
\[
EPE = |\textbf{u}_e-\textbf{u}_c|=\sqrt{(u_1^e-u_1^c)^2 + (u_2^e-u_2^c)^2},
\] 
where $\textbf{u}_e= (u_1^e,u_2^e)$ is the exact optical flow, $\textbf{u}_c= (u_1^c,u_2^c)$ is the computed optical flow. From the table, it is conclusive that our two-phase method outperforms the CEC method.

\subsection{Experiments on Cloud Sequence}
In this sequence, the movement of the fluid exhibits both formation of a vortex as well as a movement of fluid parcels. The distribution of the strength of the vortices in the cloud sequence obeys a Gaussian distribution of mean 0 and standard deviation of 3000 (pixels)$^2/$s.

\begin{figure}[H]%
	\centering
	\subfloat{{\includegraphics[width=4cm]{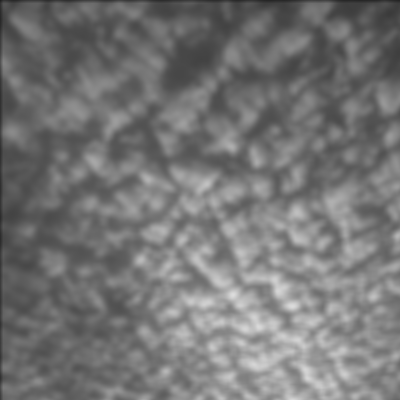}}}%
	\qquad
	\subfloat{{\includegraphics[width=4cm]{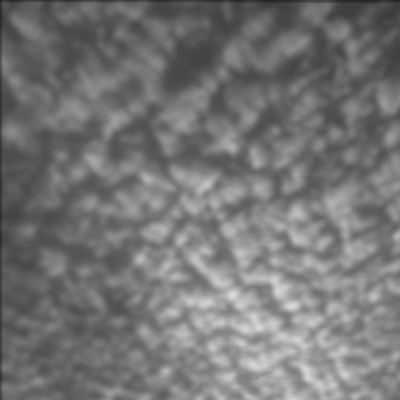}}}%
	\caption{Cloud sequence}%
	\label{f4}%
\end{figure}
Figure (\ref{f4}) shows the cloud sequence. The comparison of the velocity magnitude plots are shown below:
\begin{figure}[H]%
	\hspace{-1.1cm}
	\subfloat[CEC with $L^2$ regularization]{{\includegraphics[width=9cm]{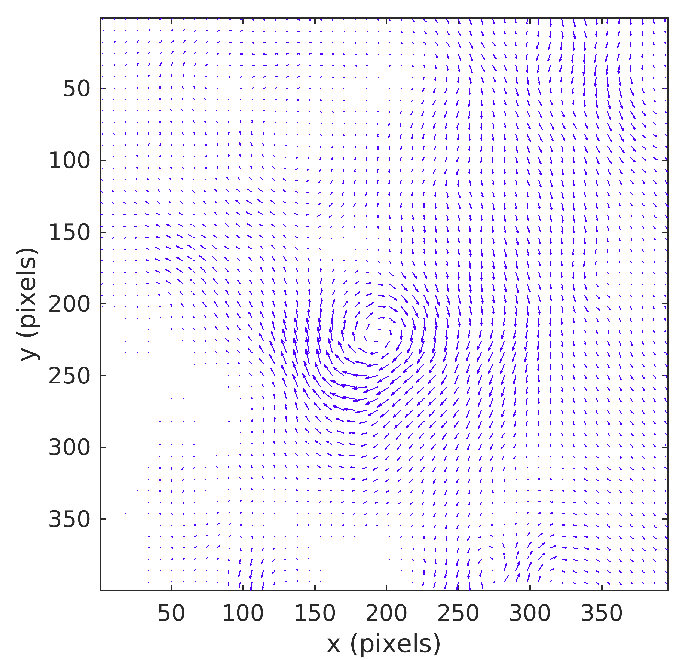}}}%
	\hspace{0.5cm}
	\subfloat[Our approach with $L^{2}$ regularizer]{{\includegraphics[width=9cm]{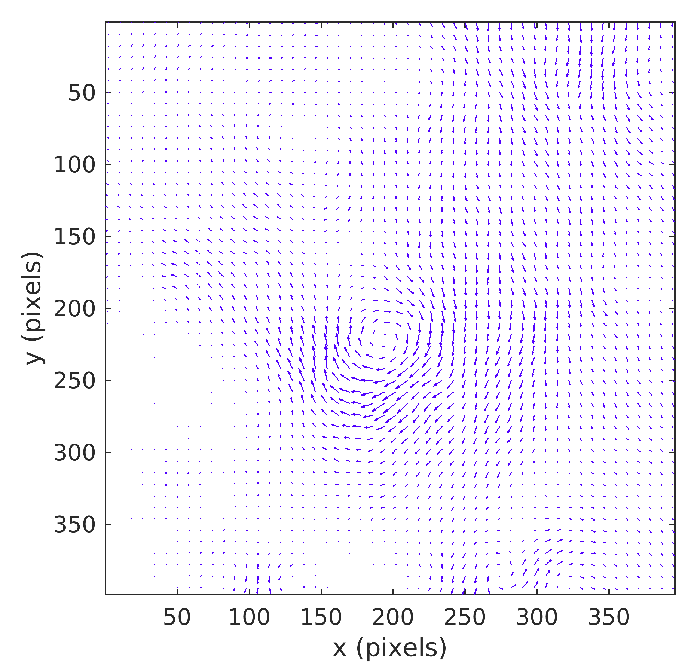}}}%
	\caption{Vorticity plot for the cloud sequence}%
	\label{f5}%
\end{figure}
As seen from Figure (\ref{f5}) the isotropic behaviour of the regularization is seen more on the continuity equation based implementation because of the denseness of the flow. Also by increasing the number of iterations we have observed that the effect of diffusion makes the vortices completely circular. The distribution of the $x$-component of the velocity for the cloud sequence is shown in Figure (\ref{f57}). The Horn and Schunck estimator tends to over estimate at the peaks.

\begin{figure}[H]%
	\centering
	\subfloat{{\includegraphics[width=11cm]{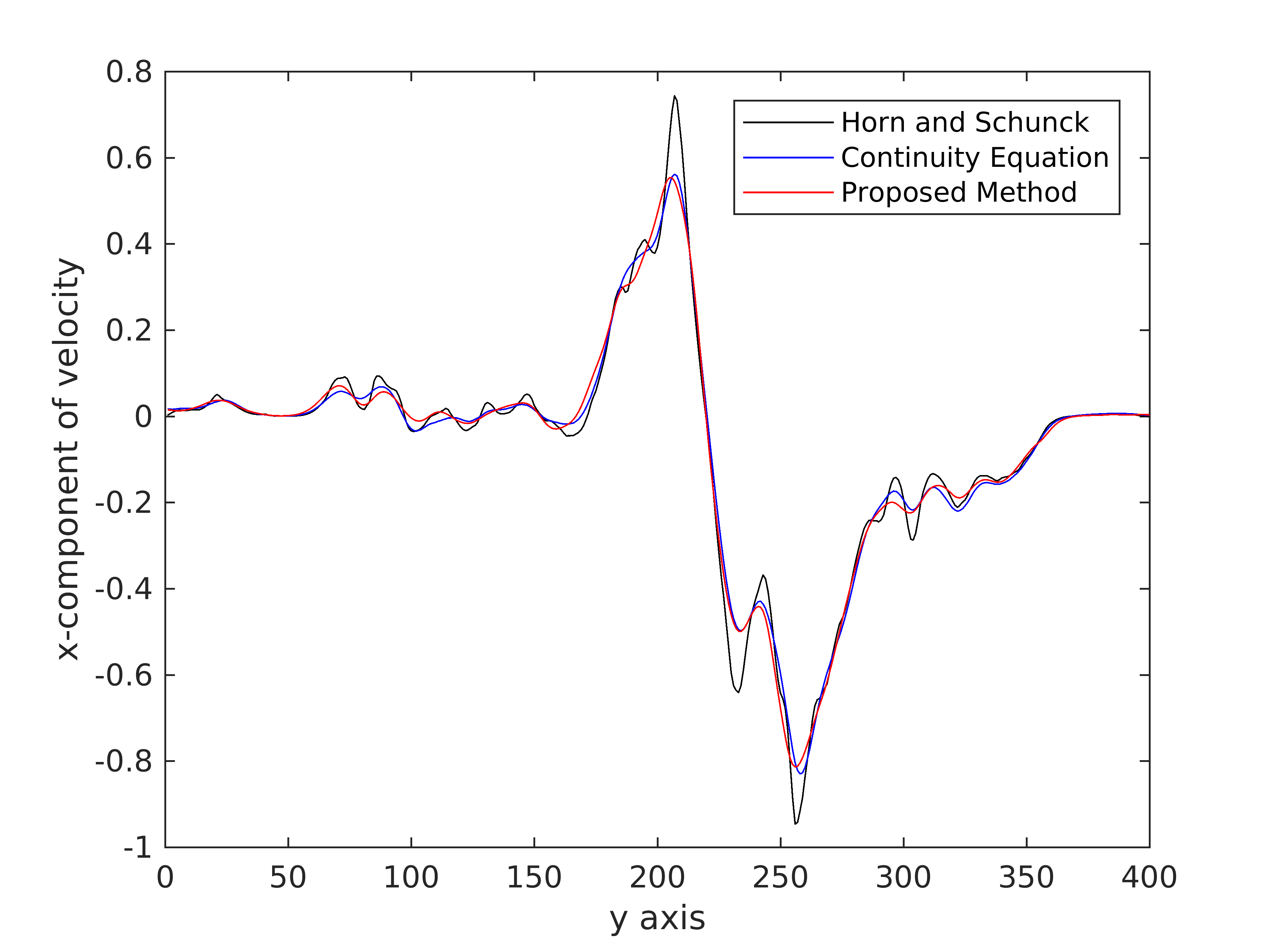}}}%
	\caption{Comparison between the distributions of the $x$-component of the velocity extracted from the grid images for the cloud sequence}
	\label{f57}
\end{figure}

\subsection{Experiments on Jupiter's White Oval Sequence}

Figure (\ref{f135}) shows Jupiter's white oval sequence. The white ovals seen in the images are distinct storms on Jupiter's atmosphere captured by NASA's Galileo spacecraft at a time-lapse of one hour, see \cite{Liu2}.

\begin{figure}[H]%
	\centering
	\subfloat{{\includegraphics[width=4cm]{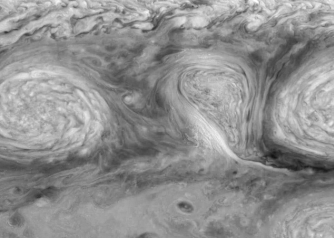}}}%
	\qquad
	\subfloat{{\includegraphics[width=4cm]{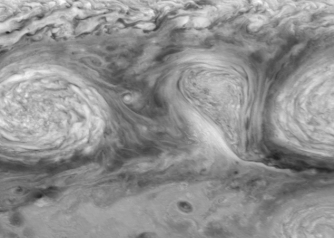}}}%
	\caption{Jupiter's white oval sequence}%
	\label{f135}%
\end{figure}

\subsubsection{Effect of Illumination Changes on Optical Flow}
Due to the time difference between adjacent frames, it was observed that the sun's illumination influenced the subsequent frame considerably in a non-uniform way. To compensate for the illumination effects, it is necessary to account for the illumination variation before applying the optical flow method. In Liu's implementation, an illumination correction is employed by normalizing the intensities and performing a local intensity correction using Gaussian filters. The first plot in Figure (\ref{f15}) shows the results of their implementation of the CEC model. 
\begin{figure}[H]%
	\hspace{-1.2cm}
	\subfloat[CEC + $L^2$ with illumination correction]{{\includegraphics[width=9cm]{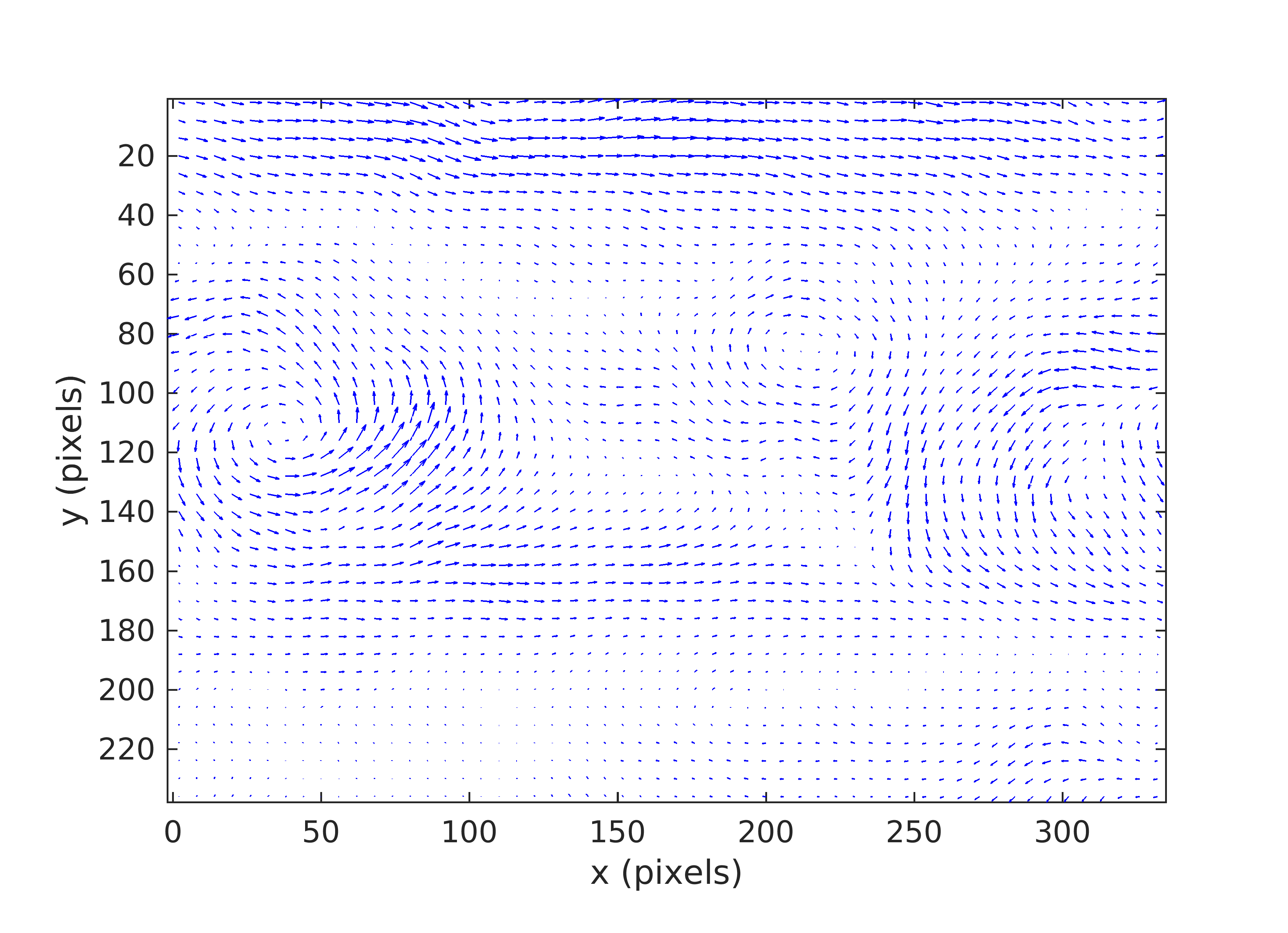}}}
	\subfloat[Our approach with illumination correction]{{\includegraphics[width=9cm]{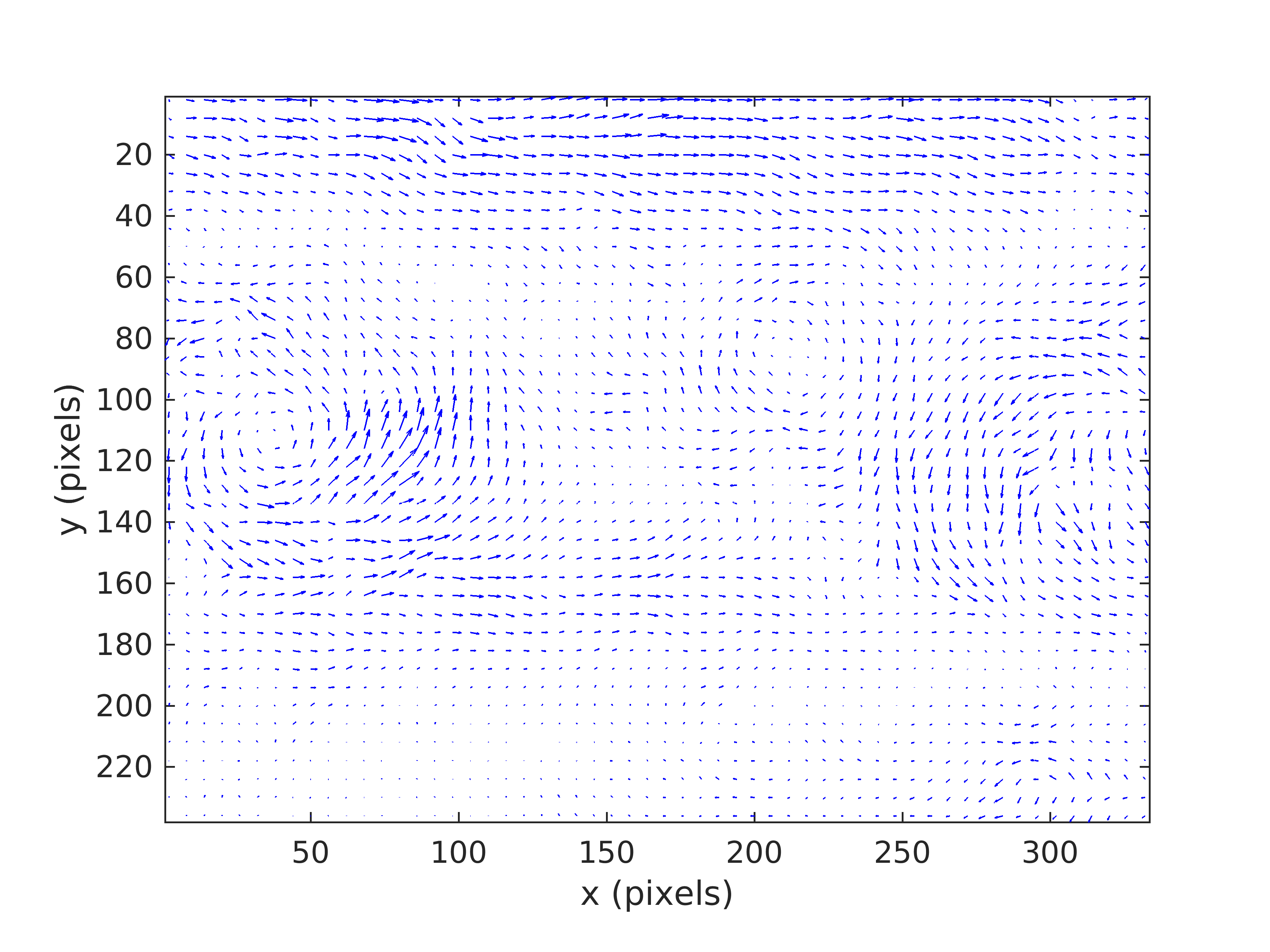}}}
	\caption{Velocity magnitude of the Jupiter's white oval sequence}%
	\label{f15}%
\end{figure}

The following comparison demonstrates the effect of illumination changes on the optical flow computation.
\begin{figure}[H]%
	\hspace{-1.1cm}
	\subfloat[Our approach without illumination correction]{{\includegraphics[width=9.3cm]{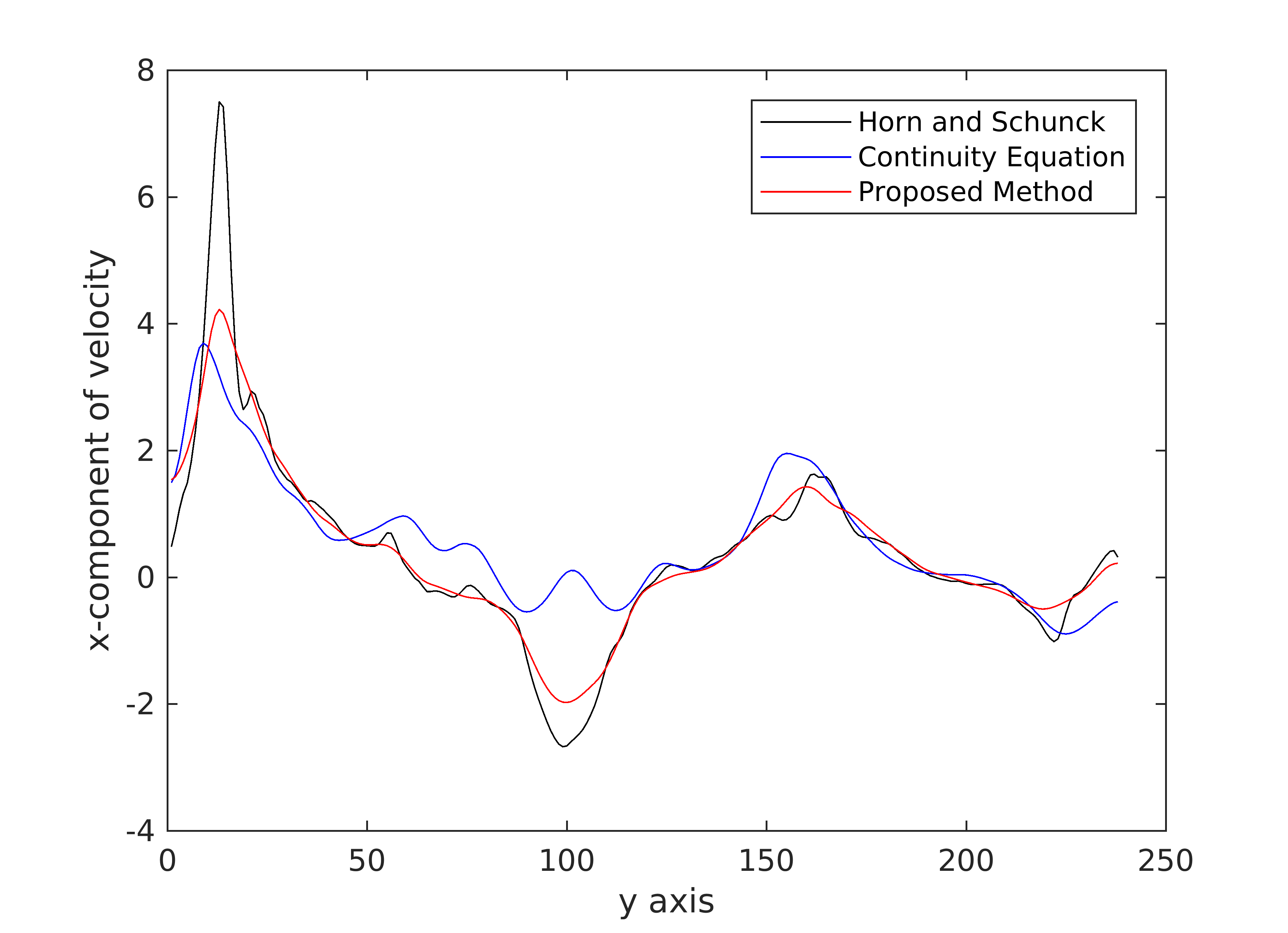}}}
	\subfloat[Our approach with illumination correction]{{\includegraphics[width=9cm]{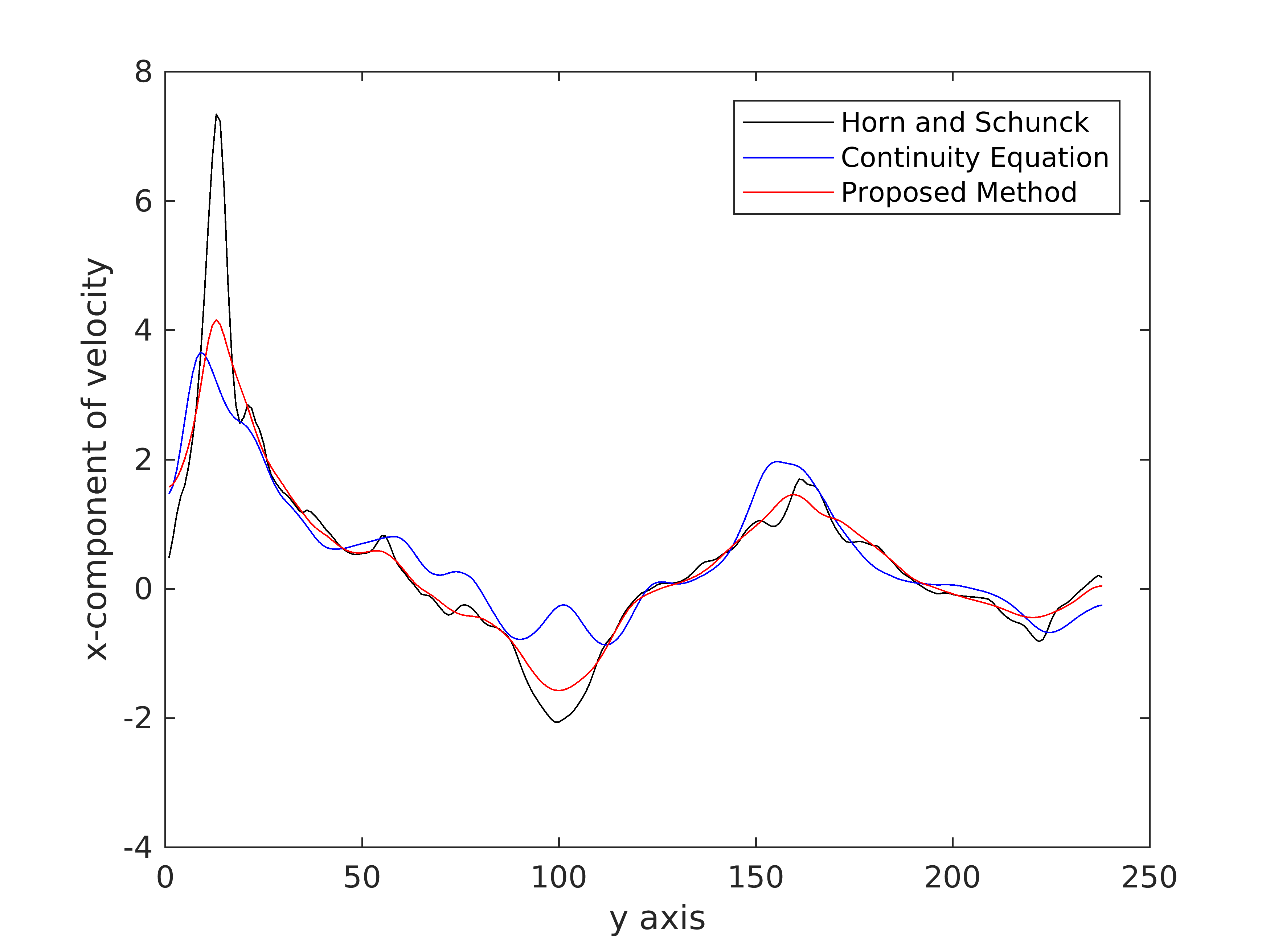}}}
	\caption{Effect of the illumination correction on the distribution of the $x$-component of the velocity for Jupiter's white oval sequence}%
	\label{f16}%
\end{figure}
As seen from Figure (\ref{f16}) there is a large deviation near the vortex region when illumination correction is not taken into account. The deviation is minimized to a great extent as can be seen from the second image. The reason for our results (even with illumination correction) not being very close to the illumination-corrected CEC-based flow is because of the direct dependency of the process on the image data.

\subsection{Demonstration of the Flow-Driven Refinement Process}

Rather than correcting the illumination changes by modifying the scheme we choose a flow-driven refinement process ($\phi(f)=1$) and perform a diffusion on the curl component. In order to achieve this, we consider the fourth case from Table (\ref{tab}), $\phi(f)=1$ and $\psi=(\nabla_H\cdot \textbf{u})^2$ where $\nabla_H=(-\partial_y,\partial_x)$ is the Hamiltonian  gradient. Introducing the symplectic gradient switches the roles of divergence and curl in the Equation (\ref{decsys}) and the analysis follows in the same lines. As mentioned earlier this particular choice captures the rotational aspects of the flow much better. 

\begin{figure}[H]%
	\hspace{-1.1cm}
	\subfloat{{\includegraphics[width=9.3cm]{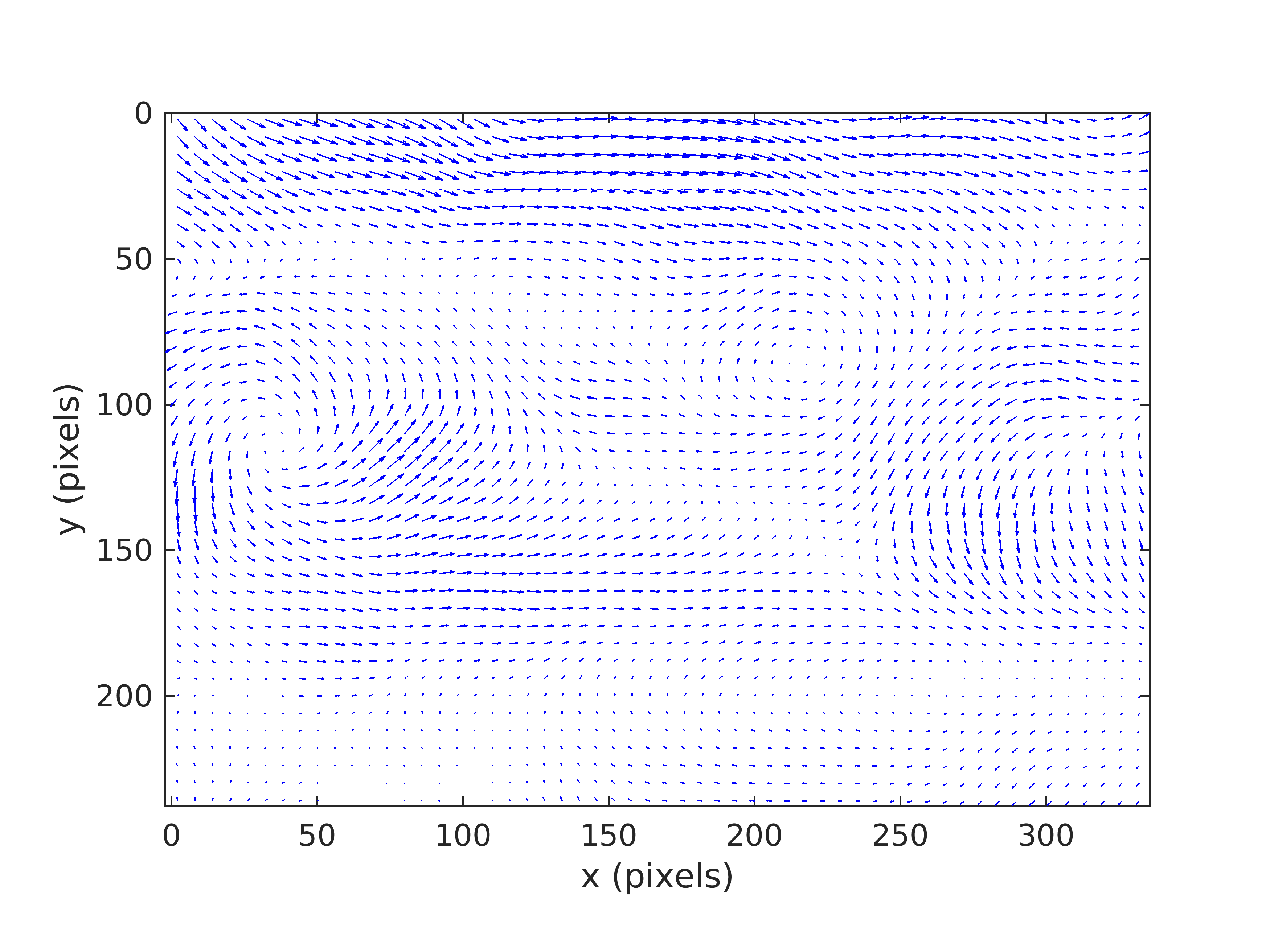}}}
	\subfloat{{\includegraphics[width=9cm]{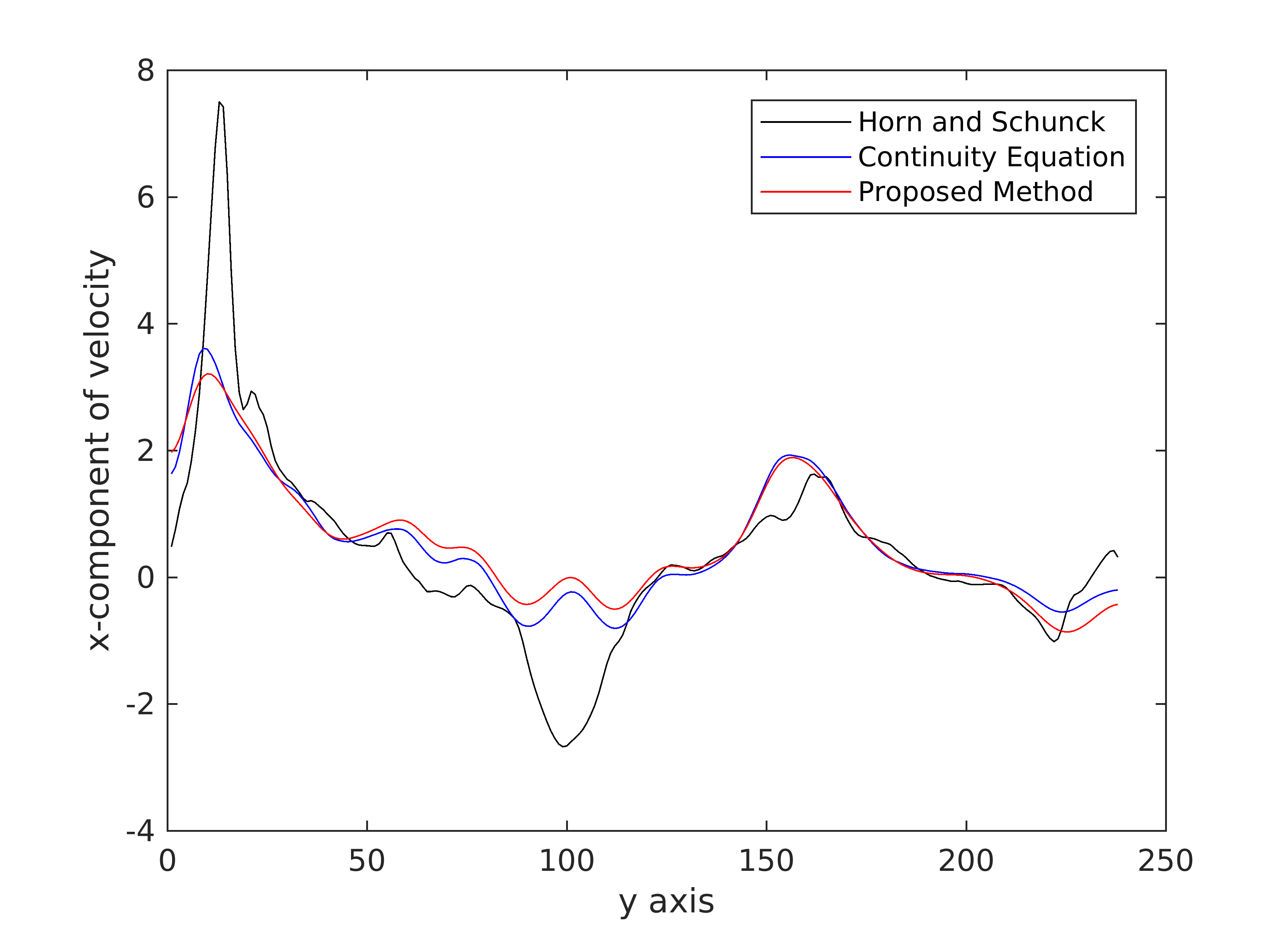}}}
	\caption{Plots of the Jupiter's white oval sequence with $\phi(f)=1,\psi(\mathbf{u})=(\nabla_{H}\cdot\mathbf{u})^{2}$ and without illumination correction.}%
	\label{f20}%
\end{figure}
Figure (\ref{f20}) gives the velocity magnitude plot obtained by our constraint-based refinement process $\phi(f)=1$ and $\psi(\mathbf{u})=(\nabla_{H}\cdot \mathbf{u})^{2}$ of Jupiter's white oval sequence along with the distribution of the $x$-component of the velocity. The ovals are clearly captured by our algorithm. From the distribution of the velocity plot, it is also clear that the flow-driven refinement process involving the curl outperforms the CEC-based flows without the additional asssumption of illumination correction on the image data.

\subsection{Discussion on the Choice of Parameters}
In the Liu-Shen implementation of CEC based model, the Lagrange multiplier in the HS-estimator is chosen to be 20 and in the Liu-Shen estimator, it is fixed at 2000. They observed that for a refined velocity field it does not significantly affect the velocity profile in a range of 1000-20,000 except the peak velocity near the vortex cores in this flow. For the image sequences, the best result was obtained for the values $\alpha=100$ and $\beta=0.01$. It was also observed experimentally that the numerical scheme converges when the ratio $\beta/\alpha$ is less than or equal to $10^{-4}$. 

\subsection*{Conclusion}
We have proposed a general framework for fluid motion estimation using a constraint-based refinement approach. We observed a surprising connection to the Cauchy-Riemann operator that diagonalizes the system leading to a diffusive phenomenon involving the divergence and the curl of the flow. For a particular choice of the additional constraint, we showed that our model closely approximates the continuity equation based model by a modified augmented Lagrangian approach. Additionally, we demonstrated that a flow-driven refinement process involving the curl of the flow outperforms the classical physics-based optical flow method without any additional assumptions on the image data.

\subsection*{Acknowledgements}
We express our deep sense of gratitude to Bhagawan Sri Sathya Sai Baba, Revered Founder Chancellor, SSSIHL. We would like to thank Dr. Shailesh Srivastava for his insights into obtaining the velocity plots.

\subsection*{Data Availability Statement}
The image sequences (data) used for this study were accessed from the repository \url{https://github.com/Tianshu-Liu/OpenOpticalFlow} available publicly as a supplementary material to \cite{Liu2} and also from \url{http://fluid.irisa.fr/data-eng.htm}.

\section*{Statements and Declarations}
\subsection*{Competing Interests}
The authors declare that they have no competing interests.

\subsection*{Funding}
The authors did not receive any financial grant during the preparation of this manuscript.

\end{document}